\pdfoutput=1
\documentclass[10pt,twocolumn,letterpaper]{article}
\usepackage[pagenumbers]{cvpr} %
\usepackage{algorithm}
\usepackage[noend]{algpseudocode}
\DeclareMathOperator*{\argmin}{arg\,min}

\usepackage{amsthm}
\newtheorem{proposition}{Proposition}
\newcommand{\tbest}[1]{\textbf{#1}}
\newcommand{\tsecond}[1]{\underline{#1}}
\usepackage{multirow}
\usepackage{makecell}
\usepackage{colortbl}
\usepackage{stfloats}
\definecolor{oursw}{RGB}{252,229,239}
\definecolor{oursv}{RGB}{238,232,250}
\definecolor{cvprblue}{rgb}{0.21,0.49,0.74}
\usepackage[breaklinks,colorlinks,allcolors=cvprblue]{hyperref}
\def\paperID{*****}
\def\confName{CVPR}
\def\confYear{2027}

\title{SpikeQ: Spike-Guided Post-Training Quantization for Spiking Neural Networks}
\author{
Kai Sun\textsuperscript{1}, Peibo Duan\textsuperscript{1,*}, Zhanglu Yan\textsuperscript{3}, Guowei Zhang\textsuperscript{1}, Yongsheng Huang\textsuperscript{2}, Levin Kuhlmann\textsuperscript{1,*}\\
{\normalsize \textsuperscript{1}Monash University \quad \textsuperscript{2}Northeastern University \quad \textsuperscript{3}National University of Singapore}\\
{\tt\small \{kai.sun1, peibo.duan, levin.kuhlmann\}@monash.edu}
}

\begin{document}
\maketitle

\begin{abstract}
On neuromorphic chips, spiking neural networks (SNNs) consume far less energy than artificial neural networks (ANNs), but the limited memory of these chips requires low-bit synaptic weights and membrane potentials. Existing post-training quantization (PTQ) methods provide them without retraining and, like those for ANNs, minimize numerical errors such as the mean squared error (MSE). In an SNN, however, a quantization error reaches later layers only through the spikes it flips. We find that errors of equal MSE can cause very different accuracy losses, whereas errors that flip the same fraction of spikes cause the same loss. We therefore propose \textbf{SpikeQ}, a spike-guided PTQ framework. For weights, we select the level of each weight by simulating the downstream neurons and minimizing their spike errors. For membrane potentials, most existing methods quantize the state after the reset, whereas neuromorphic chips compare the threshold with the membrane potential stored at a fixed bit width. We target this pre-fire quantization and build a reset-aware grid that places the threshold on a rounding boundary, so rounding never flips the spike at the current time step. We evaluate SpikeQ on seven benchmarks of image classification, semantic segmentation, and object detection. With 2-bit pre-fire membrane potentials, every existing grid collapses, whereas SpikeQ retains most of the full-precision accuracy. With both at 4~bits, the models stay within 2.59 points of full precision, shrink their memory footprint \mbox{7.8--7.9$\times$}, and use \mbox{6.4--6.9$\times$} less energy.
\end{abstract}

\section{Introduction}
\begin{figure}[t]
\centering
\includegraphics[width=\columnwidth]{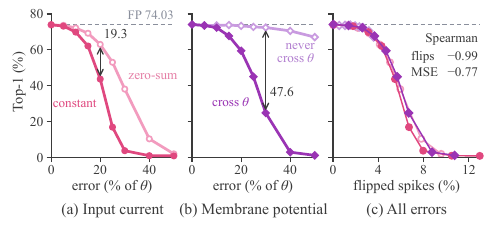}
\caption{Accuracy of a full-precision SNN (PreAct-ResNet18, CIFAR-100) under injected errors of the same MSE. (a) Constant versus zero-sum errors in the input current. (b) Membrane potential errors that cross $\theta$ versus never cross $\theta$. (c) Accuracy versus the fraction of flipped spikes for all errors in (a) and (b).}
\label{fig:motivation}
\end{figure}

Spiking neural networks (SNNs) have attracted growing interest as an energy-efficient alternative to artificial neural networks (ANNs). Instead of dense multiply-accumulate operations, SNNs compute with sparse spikes. On neuromorphic chips such as Loihi and Speck~\cite{davies2018loihi,yao2024speck}, this event-driven computation consumes substantially less energy than ANN inference on conventional hardware~\cite{roy2019towards,horowitz2014computing}. Recent work has further scaled SNNs to deep residual networks and spiking transformers~\cite{zheng2021going,fang2021deep,yao2023spike}, narrowing their accuracy gap to ANNs. As these models grow, however, their floating-point synaptic weights and membrane potentials become a major deployment cost. Neuromorphic chips store both quantities in limited on-chip memory and repeatedly read and update the membrane potential of each neuron. Quantizing both weights and membrane potentials is therefore necessary to run large SNNs on such chips without losing their energy advantage~\cite{shen2024conventional}.

SNN quantization methods fall into two paradigms, namely quantization-aware training (QAT) and post-training quantization (PTQ). QAT simulates quantization during training and learns low-precision weights and membrane potentials with surrogate gradients~\cite{wei2024qsnn,yin2024mint,hasssan2024spquant}, but it repeats the full training through time for every bit width and requires the original training data. PTQ instead calibrates a pretrained SNN on a small unlabeled set without retraining, which makes it more practical for large SNNs. Current PTQ methods for SNNs largely follow those developed for ANNs. Weight quantization either learns the rounding by reconstructing the output of a layer or block~\cite{nagel2020adaround,li2021brecq,xie2026ptq4snn} or compensates the rounding error with second-order information~\cite{frantar2023gptq,mei2026soptq,shi2026sbc}. Membrane quantization treats the membrane potential as an activation and uses a uniform grid whose scale is fitted to its distribution or tied to the weight scale~\cite{flowq,xie2026ptq4snn}. In both cases, quantization is calibrated by minimizing numerical errors.

Unlike ANNs, SNNs transmit information between neurons only through spikes, and each neuron fires only when its membrane potential, accumulated over time steps, crosses the threshold. A quantization error in the input current or membrane potential therefore reaches later layers through the spikes it changes, now or at later time steps, yet existing PTQ objectives measure only its magnitude, such as the mean squared error (MSE). Errors of the same MSE can change spikes very differently. For example, a constant input-current error of $+0.2\theta$ at every time step keeps raising the membrane potential and can push it across the threshold, whereas a zero-sum one alternating between $\pm0.2\theta$ has the same MSE but largely cancels out over time. To test whether MSE or spikes determine accuracy, we inject errors of the same MSE into every neuron of a full-precision SNN (\cref{fig:motivation}). For the input current, which weight quantization perturbs, constant errors lower accuracy far more than zero-sum ones (\cref{fig:motivation}(a)). For the membrane potential, we reverse the sign of an error whenever it would push the membrane potential across the threshold. This keeps the MSE unchanged but reduces the sharp accuracy drop of the original errors to almost none (\cref{fig:motivation}(b)). Plotted against the fraction of flipped spikes, i.e., spikes added or removed by the error, the four curves of \cref{fig:motivation}(a) and (b) merge into one (\cref{fig:motivation}(c)), so the same fraction of flipped spikes causes the same accuracy loss. The Spearman correlation between the two is $-0.99$, far stronger than that of MSE, so more flipped spikes almost always means lower accuracy. Spikes are therefore a better guide than MSE for SNN quantization.

In this paper, we propose SpikeQ, a spike-guided PTQ framework that uses spikes rather than numerical errors to guide the quantization of both weights and membrane potentials without gradient-based fine-tuning. Our contributions are summarized below.
\begin{itemize}
\item We show that the accuracy loss caused by quantization errors follows the fraction of spikes they flip rather than their MSE, which existing PTQ methods minimize.
\item We propose SpikeQ, which quantizes both weights and membrane potentials by minimizing spike errors relative to full precision and makes the threshold a rounding boundary, so rounding never flips the current spike.
\item Extensive experiments on image classification, semantic segmentation, and object detection show that SpikeQ outperforms existing PTQ methods, and its 4-bit models lose at most 2.59 points while shrinking the memory footprint \mbox{7.8--7.9$\times$} and using \mbox{6.4--6.9$\times$} less energy.
\end{itemize}

\section{Related Work}
\paragraph{SNN Quantization.}
SNN quantization follows two paradigms, QAT and PTQ. QAT simulates quantization in the forward pass, as in ANNs~\cite{jacob2018quantization,esser2020lsq}, and trains the SNN by backpropagation through time with surrogate gradients~\cite{wu2018spatio,neftci2019surrogate}. It has produced low-bit, binary, and even sub-bit weights~\cite{eshraghian2022navigating,shymyrbay2023low,wei2025qpsnn,kheradpisheh2022bs4nn,liang2025agmm,cao2025bestformer,wei2025s2nn} as well as low-precision membrane potentials~\cite{yin2024mint,liu2026hardfsnn,wei2024qsnn,hasssan2024spquant,venkatesh2024squat}, with the LIF neuron and distillation adapted to quantized SNNs~\cite{qiu2025qsd,lee2026mdsnn}. However, this training has to be repeated on the original data for every bit width, and each run backpropagates through all time steps, which is costly for large SNNs. PTQ instead quantizes a pretrained SNN using a small unlabeled calibration set without retraining. Existing PTQ methods allocate bit widths across layers or neurons~\cite{li2025neuronquant,putra2025qsvit} or calibrate the quantization of weights alone~\cite{mei2026soptq,shi2026sbc} or with membrane potentials~\cite{flowq,xie2026ptq4snn}. In this paper, we focus on PTQ.

\paragraph{Weight Quantization.}
Weight PTQ for SNNs selects the rounding by aligning one of two quantities between the full-precision and quantized networks, the input current or the membrane potential (\cref{fig:problem}(a)). Current alignment minimizes the error of each layer output, which is the input current of its neurons. AdaRound and GPTQ have been applied directly to SNNs for this purpose~\cite{nagel2020adaround,frantar2023gptq,xie2026ptq4snn}, and S-OPTQ adapts GPTQ to spiking vision transformers~\cite{mei2026soptq}. Membrane alignment instead minimizes the error of the membrane potential before firing. SBC defines its objective on output spike trains, but with a constant surrogate gradient its Hessian equals that of this error for a neuron that never fires or resets~\cite{shi2026sbc}. However, their rounding ignores how the errors change spikes, which pass to the next time step and layer. A large error far from the threshold changes no spike, whereas a small error near the threshold flips one.

\paragraph{Membrane Quantization.}
Most SNN quantization methods quantize the membrane potential after the reset, that is, the state carried to the next time step. Their membrane grids follow activation quantization and differ mainly in how the scale, the spacing between quantization levels, is chosen. Some fit the scale to the range of the membrane potential~\cite{wei2024qsnn,hasssan2024spquant}, and others derive it from the scale of the weights~\cite{yin2024mint,liu2026hardfsnn,flowq,xie2026ptq4snn}. However, neuromorphic chips store the membrane potential at a fixed bit width and compare this stored value with the threshold~\cite{merolla2014million,davies2018loihi,yao2024speck}, so on chip the membrane potential is quantized before firing. Applied there, these grids leave the threshold inside a rounding interval, even when they concentrate levels near it~\cite{venkatesh2024squat}, so rounding can push a near-threshold membrane potential across the threshold and flip the spike.

\begin{figure*}[t]
\centering
\includegraphics[width=0.95\textwidth]{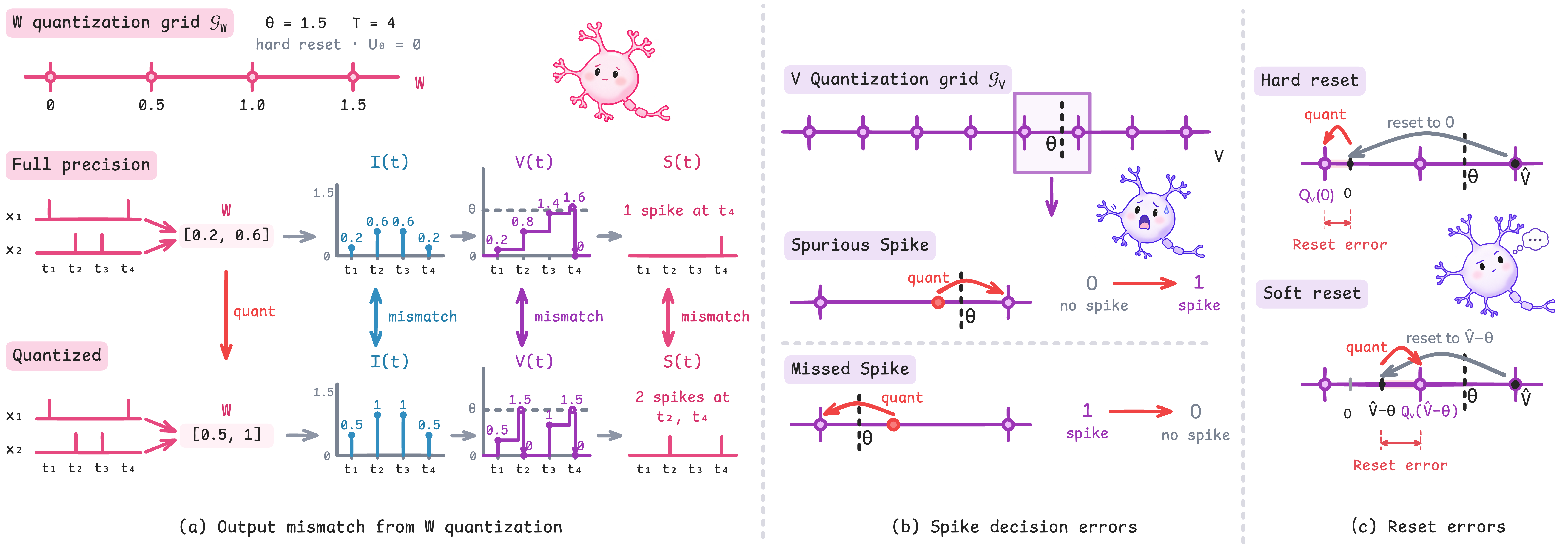}
\caption{Spike errors caused by quantization. (a) Mismatch of the input current, membrane potential, and spikes after weight quantization. (b) Spurious and missed spikes caused by rounding the membrane potential. (c) Reset errors under hard and soft reset.}
\label{fig:problem}
\end{figure*}

\section{Preliminaries}
\label{sec:preliminaries}
\paragraph{LIF Neuron Model.}
For a leaky integrate-and-fire (LIF) neuron with input current $I_t$, let $V_t$ be the membrane potential before firing, $S_t\in\{0,1\}$ the spike, and $U_t$ the state after reset. With time constant $\tau$, threshold $\theta>0$, and $U_0=0$, the dynamics are
\begin{equation}
\begin{aligned}
V_t&=\left(1-\frac1\tau\right)U_{t-1}+\frac{I_t}{\tau},\\
S_t&=\mathbf{1}[V_t\geq\theta],\\
U_t&=
\begin{cases}
V_t(1-S_t), & \text{hard reset},\\
V_t-\theta S_t, & \text{soft reset}.
\end{cases}
\end{aligned}
\label{eq:dynamics}
\end{equation}
Over $T$ steps, a neuron fires $N=\sum_{t}S_t$ spikes.

\paragraph{Weight Quantization.}
With $b$ bits, each weight $w$ is replaced by one of $2^b$ evenly spaced levels, which form the weight grid $\mathcal G_W$. Each level is a multiple of the step $\Delta_w>0$, the distance between neighboring levels. A weight at level $q$ becomes $\widehat w=q\,\Delta_w$, where the integer $q$ takes $2^b$ consecutive values, such as $\{-2,-1,0,1\}$ for $b=2$. Each output channel has its own $\Delta_w$ and range of $q$, so quantizing a weight means selecting its level $q$.

\paragraph{Membrane Quantization.}
With $b$ bits, the membrane potential is rounded by $Q_v$ to one of $K=2^b$ evenly spaced levels with step $\Delta_v$, which form the membrane grid $\mathcal G_V$. Post-reset quantization rounds the state after reset, $\widehat U_t=Q_v(U_t)$, so firing is still decided by the full-precision $V_t$. Pre-fire quantization instead rounds the membrane potential before firing, $\widehat V_t=Q_v(V_t)$, and uses $\widehat V_t$ for both firing and reset.

\section{Method}
\label{sec:method}
SpikeQ uses spikes to guide quantization in two stages. The weight stage selects the level of each weight by simulating the downstream neurons, and the membrane stage builds the membrane grid around the threshold and the reset rule.

\subsection{Weight Quantization}
\label{sec:weight-method}
Quantizing the weights changes the input current, the membrane potential, and the spikes of the downstream neuron (\cref{fig:problem}(a)). Existing methods select the weight levels that minimize the error of the input current or the membrane potential, yet the resulting spikes can still differ from the full-precision ones (\cref{fig:w-method}). SpikeQ instead takes the spikes as the optimization target and selects the weight levels that minimize the spike error.

\begin{figure}[t]
\centering
\includegraphics[width=0.9\columnwidth]{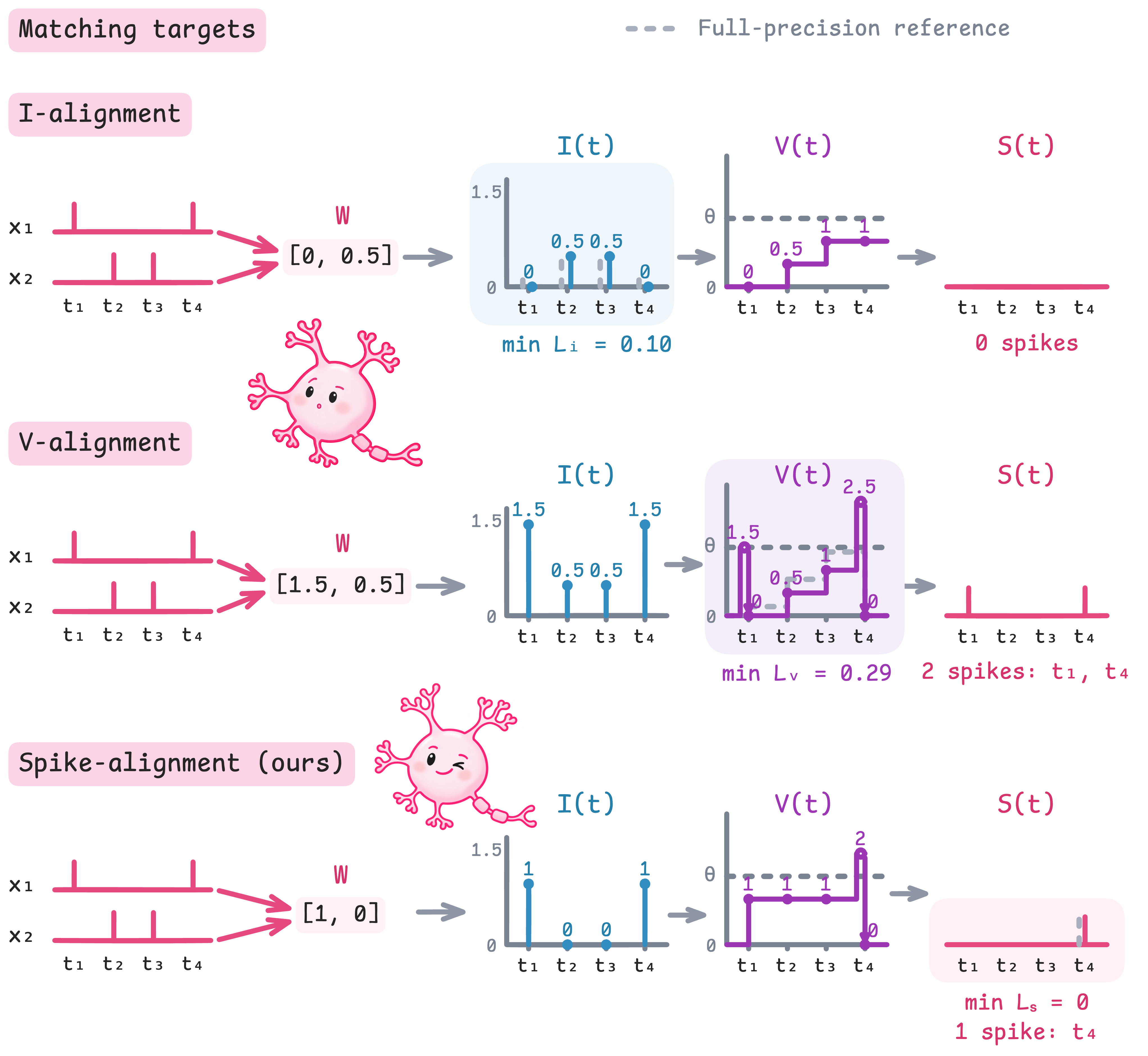}
\caption{Weight rounding for the neuron of \cref{fig:problem}(a). Each row selects the weight levels with the smallest error of the input current (top), membrane potential (middle), or spikes (bottom). Only the spike error recovers the full-precision spike.}
\label{fig:w-method}
\end{figure}

\paragraph{Spike-count objective.}
In layer $\ell$, the input spikes $\widehat{\mathbf S}^{\ell-1}_t$ from layer $\ell-1$ pass through the quantized weights $\widehat W^{\ell}$ and the LIF neurons of \cref{eq:dynamics}, and $\widehat N_i$ counts the spikes of neuron $i$ over the $T$ steps. SpikeQ compares $\widehat N_i$ with two reference counts. The full-precision count $N_i^{\mathrm{FP}}$ comes from the original network, in which no layer is quantized. Matching it directly ignores that the earlier layers have already been quantized and have changed the input spikes $\widehat{\mathbf S}^{\ell-1}$ of layer $\ell$. We therefore also match the local count $N_i^{\mathrm{loc}}$, produced by the full-precision weights $W^{\ell}$ on this changed input $\widehat{\mathbf S}^{\ell-1}$. The spike-count loss is a weighted sum of the errors to the two counts, with $\rho\in[0,1]$,
\begin{equation}
\mathcal L_{\mathrm{count}}
=\rho\sum_i\bigl|\widehat N_i-N_i^{\mathrm{FP}}\bigr|
+(1-\rho)\sum_i\bigl|\widehat N_i-N_i^{\mathrm{loc}}\bigr|.
\label{eq:count-loss}
\end{equation}

\paragraph{Margin term.}
Changing a weight by one level shifts the membrane potential but may not move it across $\theta$. The spike count then does not change, and neither does $\mathcal L_{\mathrm{count}}$. In this case, we choose the level that brings the wrong spikes closest to being corrected. Since each wrong spike occurs at a specific step $t$, we compare spikes step by step with the same two references as the counts, the full-precision spikes $S^{\mathrm{FP}}_{i,t}$ and the local spikes $S^{\mathrm{loc}}_{i,t}$ of neuron $i$. A neuron fires when $\widehat V_{i,t}\ge\theta$, so $|\widehat V_{i,t}-\theta|$ is how much its membrane potential must change to correct a wrong spike. The margin sums this distance over all wrong spikes,
\begin{equation}
M^{\mathrm{FP}}
=\sum_{i,t}\mathbf{1}\bigl[\widehat S_{i,t}\neq S^{\mathrm{FP}}_{i,t}\bigr]
\bigl|\widehat V_{i,t}-\theta\bigr|,
\label{eq:margin}
\end{equation}
and $M^{\mathrm{loc}}$ is defined in the same way with $S^{\mathrm{loc}}_{i,t}$. The two margins are weighted by $\rho$ as in \cref{eq:count-loss}, $\mathcal L_{\mathrm{margin}}=\rho M^{\mathrm{FP}}+(1-\rho)M^{\mathrm{loc}}$.

\paragraph{Population term.}
Because $\mathcal L_{\mathrm{count}}$ takes the absolute value of each error, an extra spike and a missed spike give the same error. It thus cannot tell whether the layer fires too many or too few spikes in total. In either case, the next layer receives a wrong number of input spikes. We therefore add a population term, $\mathcal L_{\mathrm{pop}}=\sum_i(\widehat N_i-\bar N_i)$, which keeps the total spike count of the layer equal to that of the references. Here $\bar N_i=\rho N_i^{\mathrm{FP}}+(1-\rho)N_i^{\mathrm{loc}}$ weights the two references by $\rho$ as in \cref{eq:count-loss}.

The complete objective sums the three terms,
\begin{equation}
\mathcal L=\mathcal L_{\mathrm{count}}
+\lambda\mathcal L_{\mathrm{margin}}+\mu \mathcal L_{\mathrm{pop}},
\label{eq:weight-loss}
\end{equation}
where $\lambda$ is a fixed weight and $\mu$ is adjusted during the search. Since $\mathcal L_{\mathrm{pop}}$ grows with the number of spikes, a larger $\mu$ favors levels with fewer spikes.

\begin{algorithm}[t]
\caption{Weight quantization of SpikeQ}
\label{alg:weight}
\small
\begin{algorithmic}[1]
\Require $W$, calibration set, $b$
\For{each layer $\ell$}
    \State Capture $\widehat{\mathbf S}^{\ell-1}$; fix $\mathcal G_W$ per channel
    \State $\mathcal L\gets$ Eq.~\eqref{eq:weight-loss}
    \State Initialize from $W^{\ell}$; $\mu\gets0$
    \For{each input $j$, largest current first} \Comment{First phase}
        \State $q_j\gets\argmin_{q\in\mathcal A_j}\mathbb E_{\varepsilon}[\mathcal L(q_j=q)]$
    \EndFor
    \For{each sweep} \Comment{Second phase}
        \State Keep $\pm1$ level changes that reduce $\mathcal L$
        \State Restore and freeze channels whose held-out loss rises
        \State Update $\mu$
    \EndFor
    \State Set $\widehat w\gets q\,\Delta_w$ for every weight of $W^{\ell}$
\EndFor
\State \Return $\widehat W$
\end{algorithmic}
\end{algorithm}

\paragraph{Discrete search.}
Since $\mathcal L$ has no useful gradient through the spikes, we search the levels directly (Algorithm~\ref{alg:weight}). The best rounding of one weight depends on how the other weights of the neuron are rounded, and trying all combinations is infeasible. The first phase therefore decides the weights one input at a time, for all neurons at once, starting with the inputs that contribute the most current and thus influence the spikes most. Since the best level for the spikes can lie beyond the two nearest levels, each weight $w_j$ chooses from its four nearest levels $\mathcal A_j=\{\lfloor\bar q\rfloor-1,\lfloor\bar q\rfloor,\lceil\bar q\rceil,\lceil\bar q\rceil+1\}$, where $\bar q=w_j/\Delta_w$ is its unrounded level. The choice takes the weights already decided into account, for example by rounding down when earlier weights have made the input current too large. The weights not yet decided are modeled by random rounding errors $\varepsilon$ drawn uniformly from $[-\Delta_w/2,\Delta_w/2]$, and each candidate is scored by its average loss over several draws.

The first phase only guesses the rounding of later weights, so some early choices may no longer be the best once all weights are fixed. The second phase therefore revisits each weight over several sweeps and keeps a change of one level up or down if it lowers $\mathcal L$. To avoid overfitting the small calibration set, part of it is held out, and a channel whose held-out loss rises after a sweep is restored to its previous levels and frozen. After each sweep, $\mu$ rises when the layer fires too many spikes and falls when it fires too few (\cref{sec:supp-impl}). We keep $|\mu|<1$, below the change of $\mathcal L_{\mathrm{count}}$ when one spike of a neuron is corrected, so that balancing the layer total never outweighs correcting a neuron's spikes.

\subsection{Membrane Quantization}
\label{sec:membrane-method}
Most existing methods use post-reset quantization, whereas neuromorphic chips store the membrane potential at a fixed bit width and compare the stored value with the threshold, so it is rounded before firing. SpikeQ targets this pre-fire quantization, where the rounded membrane potential decides both the spike and the post-reset state. Existing grids leave $\theta$ inside a rounding interval, so rounding can push a membrane potential near $\theta$ across it and cause a spurious or missed spike (\cref{fig:problem}(b)). The post-reset state is stored for the next step, and if it is not a level, storing it rounds it again and introduces another error (\cref{fig:problem}(c)). We therefore build the membrane grid around the threshold and the reset rule (\cref{fig:v-method}).

\begin{figure}[t]
\centering
\includegraphics[width=0.7\columnwidth]{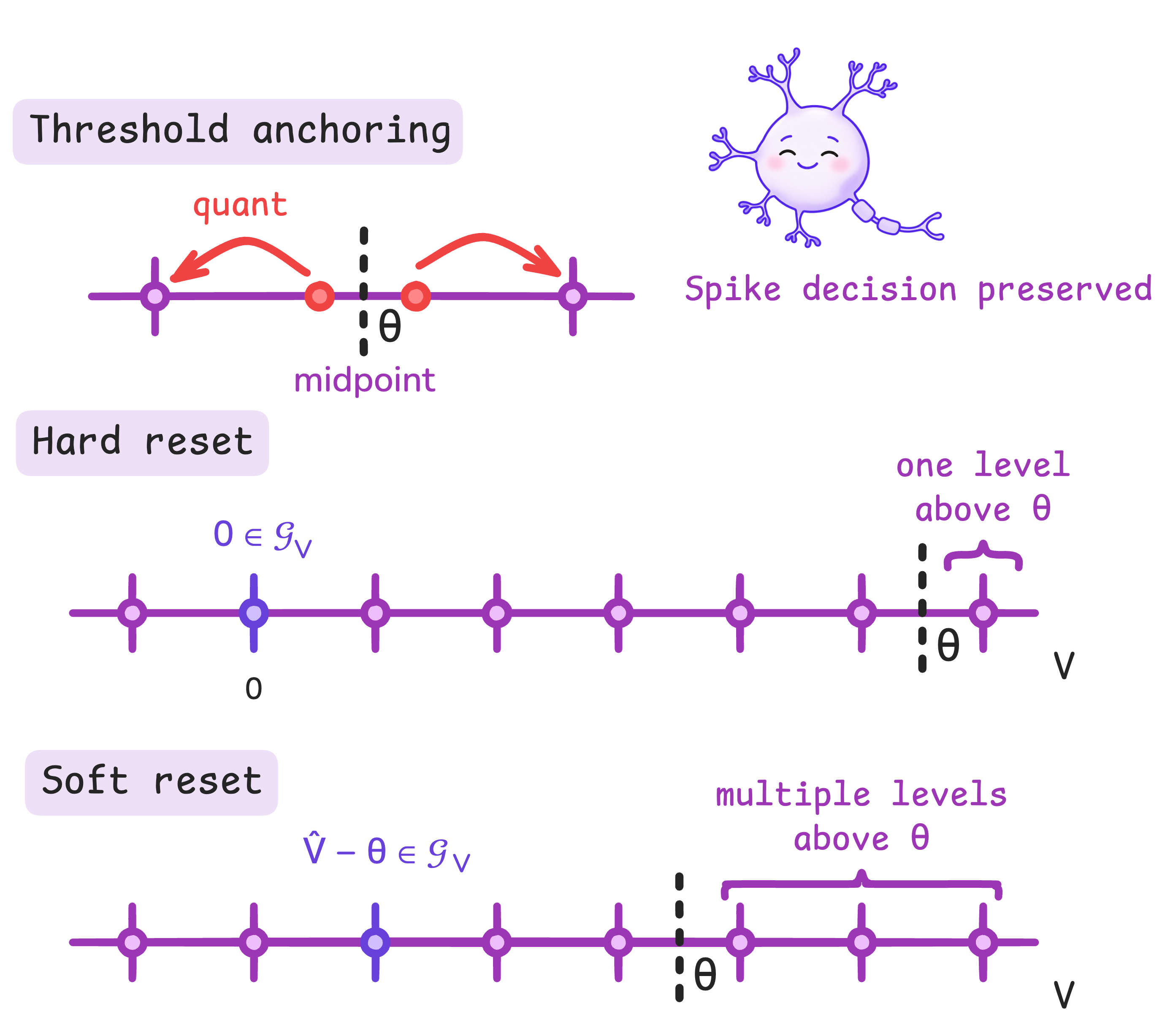}
\caption{Membrane grid of SpikeQ, with threshold anchoring (top), one level above $\theta$ under hard reset (middle), and multiple levels above $\theta$ under soft reset (bottom). The value after reset stays on the grid.}
\label{fig:v-method}
\end{figure}

\paragraph{Threshold anchoring.}
We place $\theta$ midway between two adjacent levels, which makes it the rounding boundary between them (\cref{fig:v-method}, top). With $n$ of the $K$ levels below $\theta$, the levels of $\mathcal G_V$ are
\begin{equation}
g_j=\theta+\bigl(j-n+\tfrac12\bigr)\Delta_v,
\quad j=0,\ldots,K-1,
\label{eq:anchored-grid}
\end{equation}
with step $\Delta_v>0$ and $n\in\{1,\ldots,K-1\}$. The quantizer $Q_v$ rounds to the nearest level, so it never moves a membrane potential across the rounding boundary $\theta$. A membrane potential exactly at $\theta$ is rounded up and still fires, and values outside the grid are clipped to its endpoints, which lie on either side of $\theta$. Hence $\mathbf{1}[Q_v(V)\geq\theta]=\mathbf{1}[V\geq\theta]$ for any $\Delta_v$ and $n$, so rounding the membrane potential never changes the spike at the current step.

\paragraph{Reset-aware level allocation.}
Under hard reset, the post-reset state is $0$ after every spike, so we make $0$ a level and keep only one level above $\theta$, that is, $n=K-1$ (\cref{fig:v-method}, middle). Under soft reset, it is $\widehat V_t-\theta$, which carries the part above $\theta$ to the next step, so several levels above $\theta$ are needed and $n$ is calibrated. We make $\theta$ a whole number $m$ of steps, so that subtracting $\theta$ moves each level $g_j$ above $\theta$ down to the level $g_{j-m}$ (\cref{fig:v-method}, bottom). The two rules give
\begin{equation}
\Delta_v=
\begin{cases}
\theta/(m+\tfrac12) & \text{(hard reset)},\\
\theta/m & \text{(soft reset)},
\end{cases}
\label{eq:reset-grid}
\end{equation}
where $m$ is the number of levels between $0$ and $\theta$, with $m\in\{0,\ldots,K-2\}$ under hard reset and $m\in\{1,\ldots,n\}$ under soft reset, which keeps $g_{j-m}$ on the grid. In both cases, the post-reset state is a level (\cref{sec:supp-grid}).

\paragraph{Calibration by spike errors.}
The grid is now set by $m$ in \cref{eq:reset-grid} and, under soft reset, by the number $n$ of levels below $\theta$. A smaller step rounds more finely but clips more values outside the grid, and the best balance differs between layers. We try all values and compare the grids by the spikes they change, not by the MSE of membrane potentials, which also counts rounding errors that change no spike. For each layer, we record the input currents of its neurons with the earlier layers already quantized. On these currents, we simulate each neuron with a full-precision membrane potential, giving spikes $S_{i,t}$, and with each candidate grid, giving spikes $\widehat S_{i,t}$, and keep the grid with the smallest
\begin{equation}
\mathcal L_v(m,n)=\sum_{i,t}\bigl|\widehat S_{i,t}-S_{i,t}\bigr|.
\label{eq:membrane-calibration}
\end{equation}

\section{Experiments}
\label{sec:experiments}

\subsection{Experimental Setup}
\label{sec:setup}
\paragraph{Benchmarks.}
We evaluate SNNs with $T=4$ time steps on seven benchmarks. For image classification, we use PreAct-ResNet-18~\cite{he2016identity} with soft reset on CIFAR-10 and CIFAR-100~\cite{krizhevsky2009learning}, and SEW-ResNet-18~\cite{fang2021deep} and the Spike-driven Transformer 8-384 (SDT)~\cite{yao2023spike} on ImageNet~\cite{deng2009imagenet}. For semantic segmentation, we use the 55M SDT-V2~\cite{yao2024spike} with a spiking FPN decoder on PASCAL VOC 2012~\cite{everingham2010pascal} and ADE20K~\cite{zhou2017scene}. For object detection, we use EMS-YOLO~\cite{su2023deep} with a ResNet-34 backbone on COCO~\cite{lin2014microsoft}. All other models use hard reset.

\paragraph{Baselines.}
For weights, we compare with round-to-nearest (RTN), FlowQ~\cite{flowq}, the second-order methods GPTQ~\cite{frantar2023gptq}, SBC~\cite{shi2026sbc}, and GPTAQ~\cite{li2025gptaq}, and the reconstruction methods BRECQ~\cite{li2021brecq} and PTQ4SNN~\cite{xie2026ptq4snn}. For membrane potentials, we compare with MinMax~\cite{jacob2018quantization} and MSE~\cite{nagel2021whitepaper} range calibration, the membrane quantization of FlowQ (FlowQ-M), the threshold-centered grid of SQUAT~\cite{venkatesh2024squat}, and the membrane quantization of PTQ4SNN (PTQ4SNN-M). +MPBA adds the mixed-precision bit allocation of PTQ4SNN.

\paragraph{Protocol.}
Most existing methods use post-reset quantization, so we also evaluate all membrane quantization methods under it. W$b$ and V$b$ quantize only the weights or only the membrane potentials to $b$ bits, and W$b$V$b$ quantizes both. Weights are calibrated on 1{,}024 unlabeled training images and membrane grids on 256. Further details and 3-bit results are in \cref{sec:supp-impl,sec:supp-3bit}.

\subsection{Weight Quantization}
\label{sec:exp-w}
\begin{table}[t]
\centering
\footnotesize
\setlength{\aboverulesep}{0pt}
\setlength{\belowrulesep}{0pt}
\renewcommand{\arraystretch}{1.12}
\setlength{\tabcolsep}{1.69pt}
\begin{tabular}{l*{8}{c}}
\toprule
\multirow{2}{*}{Method} & \multicolumn{2}{c}{CIFAR-10} & \multicolumn{2}{c}{CIFAR-100} & \multicolumn{2}{c}{\makecell{ImageNet\\[-1pt]{\scriptsize SEW-ResNet-18}}} & \multicolumn{2}{c}{\makecell{ImageNet\\[-1pt]{\scriptsize SDT-8-384}}} \\
\cmidrule(lr){2-3}\cmidrule(lr){4-5}\cmidrule(lr){6-7}\cmidrule(lr){8-9}
 & W4 & W2 & W4 & W2 & W4 & W2 & W4 & W2 \\
\midrule
FP & \multicolumn{2}{c}{92.60} & \multicolumn{2}{c}{73.78} & \multicolumn{2}{c}{62.89} & \multicolumn{2}{c}{72.04} \\
\midrule
RTN & 92.25 & 17.59 & 70.97 & 1.00 & 36.21 & 0.10 & 40.79 & 0.12 \\
FlowQ~\cite{flowq} & 91.96 & 10.00 & 58.59 & 1.07 & 28.12 & 0.10 & 24.19 & 0.17 \\
GPTQ~\cite{frantar2023gptq} & 92.09 & 89.53 & 71.48 & 50.49 & 61.59 & 13.90 & 68.65 & 4.42 \\
SBC~\cite{shi2026sbc} & \tbest{92.39} & 89.26 & 71.96 & 51.17 & 61.63 & 26.93 & 68.57 & 5.36 \\
GPTAQ~\cite{li2025gptaq} & 92.12 & 90.17 & \tsecond{73.10} & 59.02 & \tsecond{62.02} & 26.46 & \tsecond{69.89} & 2.51 \\
BRECQ~\cite{li2021brecq} & 92.04 & \tsecond{90.32} & 72.01 & 63.76 & 60.95 & 39.56 & 54.14 & 21.83 \\
PTQ4SNN~\cite{xie2026ptq4snn} & 91.89 & \tbest{90.36} & 72.14 & \tsecond{64.19} & 60.86 & \tsecond{40.49} & 54.97 & \tsecond{22.21} \\
\cellcolor{oursw}SpikeQ (Ours) & \cellcolor{oursw}\tsecond{92.38} & \cellcolor{oursw}90.28 & \cellcolor{oursw}\tbest{73.27} & \cellcolor{oursw}\tbest{65.72} & \cellcolor{oursw}\tbest{62.05} & \cellcolor{oursw}\tbest{43.70} & \cellcolor{oursw}\tbest{70.33} & \cellcolor{oursw}\tbest{45.76} \\
\bottomrule
\end{tabular}
\caption{Weight quantization (W$b$) on image classification with full-precision membrane potentials. CIFAR models are PreAct-ResNet-18. Top-1 accuracy (\%). Best in bold and second best underlined.}
\label{tab:cls-w}
\end{table}

\begin{table}[t]
\centering
\footnotesize
\setlength{\aboverulesep}{0pt}
\setlength{\belowrulesep}{0pt}
\renewcommand{\arraystretch}{1.12}
\setlength{\tabcolsep}{11.47pt}
\begin{tabular}{l*{4}{c}}
\toprule
\multirow{2}{*}{Method} & \multicolumn{2}{c}{VOC 2012} & \multicolumn{2}{c}{ADE20K} \\
\cmidrule(lr){2-3}\cmidrule(lr){4-5}
 & W4 & W2 & W4 & W2 \\
\midrule
FP & \multicolumn{2}{c}{70.86} & \multicolumn{2}{c}{34.91} \\
\midrule
RTN & 68.23 & 3.49 & 32.07 & 0.12 \\
FlowQ~\cite{flowq} & 60.92 & 3.49 & 24.40 & 0.05 \\
GPTQ~\cite{frantar2023gptq} & \tbest{70.64} & 52.83 & \tsecond{34.46} & 16.42 \\
SBC~\cite{shi2026sbc} & 70.25 & 56.14 & 34.22 & \tsecond{19.53} \\
GPTAQ~\cite{li2025gptaq} & 69.96 & 47.61 & 33.54 & 15.14 \\
BRECQ~\cite{li2021brecq} & 67.49 & \tsecond{62.98} & 29.85 & 19.28 \\
PTQ4SNN~\cite{xie2026ptq4snn} & 67.53 & 62.05 & 29.84 & 18.58 \\
\cellcolor{oursw}SpikeQ (Ours) & \cellcolor{oursw}\tsecond{70.36} & \cellcolor{oursw}\tbest{65.74} & \cellcolor{oursw}\tbest{34.50} & \cellcolor{oursw}\tbest{29.32} \\
\bottomrule
\end{tabular}
\caption{Weight quantization (W$b$) of SDT-V2 on semantic segmentation with full-precision membrane potentials. mIoU (\%). Notation as in \cref{tab:cls-w}.}
\label{tab:seg-w}
\end{table}

\begin{table}[t]
\centering
\footnotesize
\setlength{\aboverulesep}{0pt}
\setlength{\belowrulesep}{0pt}
\renewcommand{\arraystretch}{1.12}
\setlength{\tabcolsep}{11.47pt}
\begin{tabular}{l*{4}{c}}
\toprule
\multirow{2}{*}{Method} & \multicolumn{2}{c}{AP} & \multicolumn{2}{c}{AP$_{50}$} \\
\cmidrule(lr){2-3}\cmidrule(lr){4-5}
 & W4 & W2 & W4 & W2 \\
\midrule
FP & \multicolumn{2}{c}{31.38} & \multicolumn{2}{c}{50.86} \\
\midrule
RTN & 22.59 & 0.00 & 44.57 & 0.00 \\
FlowQ~\cite{flowq} & 12.98 & 0.00 & 30.40 & 0.00 \\
GPTQ~\cite{frantar2023gptq} & 29.97 & 7.32 & \tbest{50.25} & 15.67 \\
SBC~\cite{shi2026sbc} & \tsecond{30.03} & 10.02 & \tsecond{50.16} & 20.12 \\
GPTAQ~\cite{li2025gptaq} & 29.12 & 1.46 & 49.58 & 3.62 \\
BRECQ~\cite{li2021brecq} & 25.40 & \tsecond{14.82} & 45.80 & 31.72 \\
PTQ4SNN~\cite{xie2026ptq4snn} & 25.31 & 14.62 & 45.89 & \tsecond{32.01} \\
\cellcolor{oursw}SpikeQ (Ours) & \cellcolor{oursw}\tbest{30.23} & \cellcolor{oursw}\tbest{21.30} & \cellcolor{oursw}50.10 & \cellcolor{oursw}\tbest{39.91} \\
\bottomrule
\end{tabular}
\caption{Weight quantization (W$b$) of EMS-YOLO on COCO with full-precision membrane potentials. AP and AP$_{50}$ (\%). Notation as in \cref{tab:cls-w}.}
\label{tab:coco}
\end{table}

\begin{table}[t]
\centering
\footnotesize
\setlength{\aboverulesep}{0pt}
\setlength{\belowrulesep}{0pt}
\renewcommand{\arraystretch}{1.12}
\setlength{\tabcolsep}{1.61pt}
\begin{tabular}{l*{4}{c}}
\toprule
\multirow{2}{*}{Method} & \multicolumn{2}{c}{CIFAR-10} & \multicolumn{2}{c}{CIFAR-100} \\
\cmidrule(lr){2-3}\cmidrule(lr){4-5}
 & V4 & V2 & V4 & V2 \\
\midrule
FP & \multicolumn{2}{c}{92.60} & \multicolumn{2}{c}{73.78} \\
\midrule
MinMax~\cite{jacob2018quantization} & 68.99/91.92 & 10.00/20.78 & 43.32/72.76 & 0.78/16.00 \\
MSE~\cite{nagel2021whitepaper} & 81.61/92.47 & 10.00/91.10 & 69.62/73.53 & 1.00/56.27 \\
FlowQ-M~\cite{flowq} & 91.50/92.51 & 10.00/92.26 & 66.69/\tsecond{73.91} & 1.00/71.56 \\
SQUAT~\cite{venkatesh2024squat} & 92.06/92.37 & 10.01/90.27 & \tsecond{71.00}/73.88 & 1.00/66.52 \\
PTQ4SNN-M$^\dagger$~\cite{xie2026ptq4snn} & 90.90/92.53 & 10.00/91.89 & 63.55/73.71 & 1.00/70.23 \\
\quad+MPBA$^\dagger$~\cite{xie2026ptq4snn} & 63.32/92.54 & 10.00/92.03$^\ddagger$ & 68.09/73.74 & 1.00/70.62$^\ddagger$ \\
\cellcolor{oursv}SpikeQ (Ours) & \cellcolor{oursv}\tsecond{92.68}/\tsecond{92.55} & \cellcolor{oursv}\tbest{91.36}/\tsecond{92.48} & \cellcolor{oursv}\tbest{73.27}/\tbest{74.04} & \cellcolor{oursv}\tsecond{68.60}/\tsecond{72.96} \\
\cellcolor{oursv}SpikeQ$^\dagger$ (Ours) & \cellcolor{oursv}\tbest{92.80}/\tbest{92.66} & \cellcolor{oursv}\tsecond{91.31}/\tbest{92.76} & \cellcolor{oursv}\tbest{73.27}/73.37 & \cellcolor{oursv}\tbest{68.65}/\tbest{73.44} \\
\midrule[\heavyrulewidth]
\multirow{2}{*}{Method} & \multicolumn{2}{c}{\makecell{ImageNet\\[-1pt]{\scriptsize SEW-ResNet-18}}} & \multicolumn{2}{c}{\makecell{ImageNet\\[-1pt]{\scriptsize SDT-8-384}}} \\
\cmidrule(lr){2-3}\cmidrule(lr){4-5}
 & V4 & V2 & V4 & V2 \\
\midrule
FP & \multicolumn{2}{c}{62.89} & \multicolumn{2}{c}{72.04} \\
\midrule
MinMax~\cite{jacob2018quantization} & 0.15/1.73 & 0.11/0.11 & 7.87/66.39 & 0.10/19.58 \\
MSE~\cite{nagel2021whitepaper} & 0.10/9.29 & 0.10/0.10 & 0.47/64.33 & 0.10/5.90 \\
FlowQ-M~\cite{flowq} & 1.15/28.65 & 0.11/3.73 & 51.50/71.05 & 0.84/62.46 \\
SQUAT~\cite{venkatesh2024squat} & 4.46/60.64 & 0.08/4.06 & 0.38/71.71 & 0.10/55.89 \\
PTQ4SNN-M$^\dagger$~\cite{xie2026ptq4snn} & 13.98/47.95 & 0.12/14.30 & 45.25/70.80 & 0.14/62.26 \\
\quad+MPBA$^\dagger$~\cite{xie2026ptq4snn} & 15.83/48.11 & 0.16/14.50$^\ddagger$ & 35.61/70.21 & 0.25/62.50$^\ddagger$ \\
\cellcolor{oursv}SpikeQ (Ours) & \cellcolor{oursv}\tsecond{61.39}/\tsecond{61.40} & \cellcolor{oursv}\tbest{48.32}/\tbest{48.29} & \cellcolor{oursv}\tsecond{71.97}/\tsecond{71.97} & \cellcolor{oursv}\tsecond{67.87}/\tsecond{67.88} \\
\cellcolor{oursv}SpikeQ$^\dagger$ (Ours) & \cellcolor{oursv}\tbest{62.71}/\tbest{62.82} & \cellcolor{oursv}\tsecond{35.94}/\tsecond{35.91} & \cellcolor{oursv}\tbest{71.99}/\tbest{72.03} & \cellcolor{oursv}\tbest{68.36}/\tbest{68.47} \\
\bottomrule
\end{tabular}
\caption{Membrane quantization (V$b$) on image classification with full-precision weights. CIFAR models are PreAct-ResNet-18. Top-1 accuracy (\%). Each cell gives the pre-fire/post-reset result, ranked separately with the best in bold and second best underlined. $^\dagger$Channel-wise. $^\ddagger$2.3 bits on average, not ranked.}
\label{tab:cls-v}
\end{table}

\Cref{tab:cls-w,tab:seg-w,tab:coco} compare SpikeQ with existing weight quantization methods. At W4, it stays within 0.28 points of the best method on every benchmark, whereas at W2 it is the best on six of the seven benchmarks and more than doubles the accuracy of the strongest baseline on SDT (45.76\% vs.\ 22.21\%). The difference comes from the step size. With a fine 4-bit step, the rounding error is small and flips only neurons whose membrane potential is close to the threshold, so minimizing the error of the current or membrane potential also keeps flipped spikes few. With a coarse 2-bit step, the error cannot be made small, and what matters is where it falls. An error flips no spike in a neuron far from the threshold but flips one near it, so the rounding with the smallest error of the current or membrane potential can still flip spikes. SpikeQ shares its clipping with BRECQ and PTQ4SNN but selects levels by the spikes they change, so it can place the error where it flips none. SpikeQ thus flips the fewest spikes at W2 on every benchmark, and accuracy generally falls as flipped spikes grow (\cref{fig:flip-acc}(a) and \cref{fig:supp-flip-2bit,fig:supp-flip-4bit}).

The gain also grows with the complexity of the task, from CIFAR-10 to CIFAR-100 to ImageNet for ResNet-18 and from VOC to ADE20K for SDT-V2. A complex task separates many classes by fine differences in the features, which leaves the network less redundancy to absorb flipped spikes. At W2, the strongest baseline thus loses about twice as much accuracy on ADE20K as on VOC, so reducing flipped spikes helps most on such tasks. On CIFAR-10, whose ten classes differ coarsely, the network has enough redundancy to tolerate the flipped spikes of every strong method, so at W2 the four strongest lie within 0.19 points.

\subsection{Membrane Quantization}
\label{sec:exp-v}
\begin{table}[t]
\centering
\footnotesize
\setlength{\aboverulesep}{0pt}
\setlength{\belowrulesep}{0pt}
\renewcommand{\arraystretch}{1.12}
\setlength{\tabcolsep}{2.01pt}
\begin{tabular}{l*{4}{c}}
\toprule
\multirow{2}{*}{Method} & \multicolumn{2}{c}{VOC 2012} & \multicolumn{2}{c}{ADE20K} \\
\cmidrule(lr){2-3}\cmidrule(lr){4-5}
 & V4 & V2 & V4 & V2 \\
\midrule
FP & \multicolumn{2}{c}{70.86} & \multicolumn{2}{c}{34.91} \\
\midrule
MinMax~\cite{jacob2018quantization} & 0.05/62.40 & 3.49/50.10 & 0.17/24.87 & 0.12/15.84 \\
MSE~\cite{nagel2021whitepaper} & 3.49/69.10 & 3.49/41.22 & 0.27/33.62 & 0.12/24.01 \\
FlowQ-M~\cite{flowq} & 16.88/70.29 & 3.49/68.52 & 11.07/34.02 & 0.12/33.41 \\
SQUAT~\cite{venkatesh2024squat} & 2.91/70.61 & 0.20/61.47 & 1.59/\tbest{34.72} & 0.13/31.69 \\
PTQ4SNN-M$^\dagger$~\cite{xie2026ptq4snn} & 61.14/\tsecond{70.76} & 3.49/\tsecond{70.12} & 27.32/34.61 & 0.16/\tbest{34.29} \\
\quad+MPBA$^\dagger$~\cite{xie2026ptq4snn} & 57.16/70.70 & 3.49/69.42$^\ddagger$ & 24.33/\tsecond{34.69} & 0.12/34.27$^\ddagger$ \\
\cellcolor{oursv}SpikeQ (Ours) & \cellcolor{oursv}\tbest{70.65}/70.73 & \cellcolor{oursv}\tsecond{69.71}/69.79 & \cellcolor{oursv}\tbest{34.56}/34.56 & \cellcolor{oursv}\tsecond{33.44}/33.67 \\
\cellcolor{oursv}SpikeQ$^\dagger$ (Ours) & \cellcolor{oursv}\tsecond{70.46}/\tbest{70.85} & \cellcolor{oursv}\tbest{70.61}/\tbest{70.32} & \cellcolor{oursv}\tsecond{34.46}/34.54 & \cellcolor{oursv}\tbest{34.27}/\tsecond{33.94} \\
\bottomrule
\end{tabular}
\caption{Membrane quantization (V$b$) of SDT-V2 on semantic segmentation with full-precision weights. mIoU (\%). Notation as in \cref{tab:cls-v}.}
\label{tab:seg-v}
\end{table}

\begin{table}[t]
\centering
\footnotesize
\setlength{\aboverulesep}{0pt}
\setlength{\belowrulesep}{0pt}
\renewcommand{\arraystretch}{1.12}
\setlength{\tabcolsep}{16.09pt}
\begin{tabular}{l*{2}{c}}
\toprule
Method & V4 & V2 \\
\midrule
FP & \multicolumn{2}{c}{31.38} \\
\midrule
MinMax~\cite{jacob2018quantization} & 0.79/30.97 & 0.00/26.35 \\
MSE~\cite{nagel2021whitepaper} & 0.27/31.03 & 0.00/27.84 \\
FlowQ-M~\cite{flowq} & 28.41/31.20 & 0.13/30.84 \\
SQUAT~\cite{venkatesh2024squat} & 20.30/31.36 & 0.00/30.60 \\
PTQ4SNN-M$^\dagger$~\cite{xie2026ptq4snn} & 28.54/31.27 & 0.32/\tbest{31.28} \\
\quad+MPBA$^\dagger$~\cite{xie2026ptq4snn} & 27.49/\tsecond{31.39} & 0.49/31.13$^\ddagger$ \\
\cellcolor{oursv}SpikeQ (Ours) & \cellcolor{oursv}\tsecond{31.32}/\tbest{31.53} & \cellcolor{oursv}\tsecond{31.14}/\tsecond{31.24} \\
\cellcolor{oursv}SpikeQ$^\dagger$ (Ours) & \cellcolor{oursv}\tbest{31.39}/31.38 & \cellcolor{oursv}\tbest{31.29}/31.13 \\
\bottomrule
\end{tabular}
\caption{Membrane quantization (V$b$) of EMS-YOLO on COCO with full-precision weights. AP (\%). Notation as in \cref{tab:cls-v}.}
\label{tab:coco-v}
\end{table}

\begin{figure}[t]
\centering
\includegraphics[width=\columnwidth]{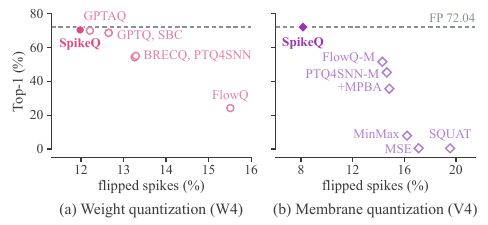}
\caption{Accuracy versus flipped spikes of SDT-8-384 on ImageNet. (a) Weight quantization (W4) with full-precision membrane potentials. (b) Membrane quantization (V4) with full-precision weights. Flipped spikes are counted against the full-precision network over all neurons and time steps.}
\label{fig:flip-acc}
\end{figure}
\paragraph{Pre-fire quantization.}
PTQ4SNN-M selects one scale per channel, so besides the layer-wise SpikeQ, we report SpikeQ$^\dagger$, which selects one grid per channel. Under pre-fire quantization, every existing grid collapses at V2 on all seven benchmarks, whereas SpikeQ retains most of the full-precision accuracy (\cref{tab:cls-v,tab:seg-v,tab:coco-v}). In existing grids, $\theta$ is not a rounding boundary, so membrane potentials between $\theta$ and the nearest one are rounded across $\theta$. This band spans up to half a step, so at 2 bits it flips many spikes in every layer and time step. The smaller step at V4 narrows the band, and existing grids partly recover, but their accuracy still falls as they flip more spikes (\cref{fig:flip-acc}(b)). Threshold anchoring makes $\theta$ itself a rounding boundary and removes the band for any $\Delta_v$, so SpikeQ flips the fewest spikes. SpikeQ$^\dagger$ improves on SpikeQ by up to 0.90 mIoU on segmentation at V2. On SEW-ResNet-18 at V2, however, it falls far behind SpikeQ, as some first-layer channels lose 73\% of their spikes. Their grid has no level between zero and $\theta$, so membrane potentials below $\theta$ are rounded to zero and cannot accumulate, whereas a layer-wise grid must suit all channels and keeps such levels.

\paragraph{Post-reset quantization.}
We also test whether minimizing flipped spikes helps under post-reset quantization, where firing still uses the full-precision $V_t$. Our grids are again more accurate than existing ones on most benchmarks (\cref{tab:cls-v,tab:seg-v,tab:coco-v}). Existing grids set the step without regard to flipped spikes. Under hard reset, for example, the MSE is dominated by negative membrane potentials far from $\theta$, whose rounding errors flip no spike. It thus selects a coarse step that rounds membrane potentials just below $\theta$ to zero, so their neurons miss the spikes they would fire at the next time step. SpikeQ selects the step by the number of flipped spikes and thus ignores these far membrane potentials.

\paragraph{Hard vs.\ soft reset.}
Under hard reset, a calibration free to select the number of levels above $\theta$ still keeps only one, so fixing this number in our grid loses nothing and reduces the search.
On the soft-reset CIFAR models, our pre-fire grids trail our post-reset grids at V2. Pre-fire quantization rounds the membrane potential before $\theta$ is subtracted and therefore also needs levels above $\theta$, whereas post-reset quantization rounds the difference, which lies below $\theta$. With the same number of levels, the pre-fire grid thus spans a wider range and is coarser, and the gap largely closes with the 16 levels of V4.

\definecolor{ecgray}{gray}{0.55}
\providecommand{\ecsubtext}[1]{\ensuremath{_{\mbox{\fontsize{5pt}{5pt}\selectfont#1}}}}
\providecommand{\ecc}[4]{\makebox[#1][r]{#3}\makebox[#2][l]{\ecsubtext{\textcolor{ecgray}{#4}}}}
\providecommand{\echd}[3]{\makebox[#1][c]{#3}\makebox[#2]{}}
\ifdefined\ecnA\else\newlength{\ecnA}\fi
\ifdefined\ecnB\else\newlength{\ecnB}\fi
\ifdefined\ecsA\else\newlength{\ecsA}\fi
\begin{table}[t]
\centering
\footnotesize
\setlength{\aboverulesep}{0pt}
\setlength{\belowrulesep}{0pt}
\renewcommand{\arraystretch}{1.12}
\settowidth{\ecnA}{0.000}\settowidth{\ecnB}{0.00}\settowidth{\ecsA}{\ecsubtext{0.00\texttimes}}
\setlength{\tabcolsep}{1.42pt}
\begin{tabular}{@{}l*{7}{c}@{}}
\toprule
\multirow{2}{*}{Setting} & \multicolumn{2}{c}{CIFAR\hspace*{\ecsA}} & \multicolumn{2}{c}{ImageNet\hspace*{\ecsA}} & \multirow{2}{*}{\echd{\ecnB}{\ecsA}{VOC}} & \multirow{2}{*}{\echd{\ecnB}{\ecsA}{\makecell{ADE\\[-1pt]20K}}} & \multirow{2}{*}{\echd{\ecnB}{\ecsA}{COCO}} \\
\cmidrule(l{0.5em}r{\dimexpr\ecsA+0.5em\relax}){2-3}\cmidrule(l{0.5em}r{\dimexpr\ecsA+0.5em\relax}){4-5}
& \echd{\ecnA}{\ecsA}{10} & \echd{\ecnA}{\ecsA}{100} & \echd{\ecnA}{\ecsA}{SEW} & \echd{\ecnB}{\ecsA}{SDT} & & & \\
\midrule
FP & \ecc{\ecnA}{\ecsA}{92.60}{} & \ecc{\ecnA}{\ecsA}{73.78}{} & \ecc{\ecnA}{\ecsA}{62.89}{} & \ecc{\ecnB}{\ecsA}{72.04}{} & \ecc{\ecnB}{\ecsA}{70.86}{} & \ecc{\ecnB}{\ecsA}{34.91}{} & \ecc{\ecnB}{\ecsA}{31.38}{} \\
W4 & \ecc{\ecnA}{\ecsA}{92.38}{} & \ecc{\ecnA}{\ecsA}{73.27}{} & \ecc{\ecnA}{\ecsA}{61.91}{} & \ecc{\ecnB}{\ecsA}{70.33}{} & \ecc{\ecnB}{\ecsA}{70.36}{} & \ecc{\ecnB}{\ecsA}{34.40}{} & \ecc{\ecnB}{\ecsA}{29.92}{} \\
W4V4 & \ecc{\ecnA}{\ecsA}{92.41}{} & \ecc{\ecnA}{\ecsA}{73.11}{} & \ecc{\ecnA}{\ecsA}{60.30}{} & \ecc{\ecnB}{\ecsA}{70.49}{} & \ecc{\ecnB}{\ecsA}{70.32}{} & \ecc{\ecnB}{\ecsA}{34.12}{} & \ecc{\ecnB}{\ecsA}{29.80}{} \\
W2 & \ecc{\ecnA}{\ecsA}{90.28}{} & \ecc{\ecnA}{\ecsA}{65.72}{} & \ecc{\ecnA}{\ecsA}{43.70}{} & \ecc{\ecnB}{\ecsA}{45.76}{} & \ecc{\ecnB}{\ecsA}{65.74}{} & \ecc{\ecnB}{\ecsA}{29.32}{} & \ecc{\ecnB}{\ecsA}{21.30}{} \\
W2V2 & \ecc{\ecnA}{\ecsA}{88.50}{} & \ecc{\ecnA}{\ecsA}{57.49}{} & \ecc{\ecnA}{\ecsA}{15.85}{} & \ecc{\ecnB}{\ecsA}{36.83}{} & \ecc{\ecnB}{\ecsA}{63.00}{} & \ecc{\ecnB}{\ecsA}{27.18}{} & \ecc{\ecnB}{\ecsA}{21.26}{} \\
\midrule
\multicolumn{8}{@{}l}{Footprint (MB)} \\
FP & \ecc{\ecnA}{\ecsA}{44.8}{} & \ecc{\ecnA}{\ecsA}{44.9}{} & \ecc{\ecnA}{\ecsA}{54.1}{} & \ecc{\ecnB}{\ecsA}{101}{} & \ecc{\ecnB}{\ecsA}{586}{} & \ecc{\ecnB}{\ecsA}{723}{} & \ecc{\ecnB}{\ecsA}{269}{} \\
W4 & \ecc{\ecnA}{\ecsA}{7.52}{6.0\texttimes} & \ecc{\ecnA}{\ecsA}{7.54}{6.0\texttimes} & \ecc{\ecnA}{\ecsA}{15.1}{3.6\texttimes} & \ecc{\ecnB}{\ecsA}{45.4}{2.2\texttimes} & \ecc{\ecnB}{\ecsA}{391}{1.5\texttimes} & \ecc{\ecnB}{\ecsA}{528}{1.4\texttimes} & \ecc{\ecnB}{\ecsA}{156}{1.7\texttimes} \\
W4V4 & \ecc{\ecnA}{\ecsA}{5.66}{7.9\texttimes} & \ecc{\ecnA}{\ecsA}{5.68}{7.9\texttimes} & \ecc{\ecnA}{\ecsA}{6.81}{7.9\texttimes} & \ecc{\ecnB}{\ecsA}{13.0}{7.8\texttimes} & \ecc{\ecnB}{\ecsA}{74.2}{7.9\texttimes} & \ecc{\ecnB}{\ecsA}{91.4}{7.9\texttimes} & \ecc{\ecnB}{\ecsA}{33.8}{7.9\texttimes} \\
W2 & \ecc{\ecnA}{\ecsA}{4.86}{9.2\texttimes} & \ecc{\ecnA}{\ecsA}{4.87}{9.2\texttimes} & \ecc{\ecnA}{\ecsA}{12.3}{4.4\texttimes} & \ecc{\ecnB}{\ecsA}{41.4}{2.4\texttimes} & \ecc{\ecnB}{\ecsA}{377}{1.6\texttimes} & \ecc{\ecnB}{\ecsA}{514}{1.4\texttimes} & \ecc{\ecnB}{\ecsA}{148}{1.8\texttimes} \\
W2V2 & \ecc{\ecnA}{\ecsA}{2.86}{15.6\texttimes} & \ecc{\ecnA}{\ecsA}{2.87}{15.6\texttimes} & \ecc{\ecnA}{\ecsA}{3.44}{15.7\texttimes} & \ecc{\ecnB}{\ecsA}{6.72}{15.1\texttimes} & \ecc{\ecnB}{\ecsA}{37.6}{15.6\texttimes} & \ecc{\ecnB}{\ecsA}{46.2}{15.6\texttimes} & \ecc{\ecnB}{\ecsA}{17.1}{15.7\texttimes} \\
\midrule
\multicolumn{8}{@{}l}{EC (mJ)} \\
FP & \ecc{\ecnA}{\ecsA}{1.56}{} & \ecc{\ecnA}{\ecsA}{0.889}{} & \ecc{\ecnA}{\ecsA}{5.96}{} & \ecc{\ecnB}{\ecsA}{18.4}{} & \ecc{\ecnB}{\ecsA}{421}{} & \ecc{\ecnB}{\ecsA}{556}{} & \ecc{\ecnB}{\ecsA}{154}{} \\
W4V4 & \ecc{\ecnA}{\ecsA}{0.228}{6.8\texttimes} & \ecc{\ecnA}{\ecsA}{0.129}{6.9\texttimes} & \ecc{\ecnA}{\ecsA}{0.875}{6.8\texttimes} & \ecc{\ecnB}{\ecsA}{2.71}{6.8\texttimes} & \ecc{\ecnB}{\ecsA}{62.1}{6.8\texttimes} & \ecc{\ecnB}{\ecsA}{81.6}{6.8\texttimes} & \ecc{\ecnB}{\ecsA}{24.1}{6.4\texttimes} \\
W2V2 & \ecc{\ecnA}{\ecsA}{0.147}{10.6\texttimes} & \ecc{\ecnA}{\ecsA}{0.083}{10.8\texttimes} & \ecc{\ecnA}{\ecsA}{0.576}{10.3\texttimes} & \ecc{\ecnB}{\ecsA}{1.76}{10.5\texttimes} & \ecc{\ecnB}{\ecsA}{41.0}{10.2\texttimes} & \ecc{\ecnB}{\ecsA}{56.1}{9.9\texttimes} & \ecc{\ecnB}{\ecsA}{17.7}{8.7\texttimes} \\
\bottomrule
\end{tabular}
\caption{Joint quantization with SpikeQ, memory footprint, and energy consumption (EC). The W$b$ accuracies repeat \cref{tab:cls-w,tab:seg-w,tab:coco}. Top-1 accuracy, mIoU, and AP (\%). Footprint adds the model size and the memory of membrane potentials for one image (\cref{sec:supp-energy}). Gray subscripts give the reduction from FP.}
\label{tab:wv}
\end{table}

\subsection{Joint Quantization}
\label{sec:exp-wv}
Quantizing only the weights leaves the membrane potentials at 32 bits, so W2 shrinks the memory footprint by only \mbox{1.4--9.2$\times$}, whereas W2V2 shrinks it by \mbox{15.1--15.7$\times$} (\cref{tab:wv}). W4V4 shrinks it by \mbox{7.8--7.9$\times$} and stays within 2.59 points of full precision. W2V2 trades accuracy for this further 2$\times$ compression, and the price grows with the complexity of the task. At 2~bits, both stages still flip spikes, and accuracy drops steeply once flipped spikes accumulate. A complex task has less redundancy to absorb them (\cref{sec:exp-w}) and reaches this drop sooner. Complex tasks thus need 4~bits, whereas 2~bits pay off only on simple tasks under tight memory budgets.

\subsection{Energy Consumption}
\label{sec:exp-energy}
Following prior work~\cite{yan2024reconsidering,yan2026otters}, the energy consumption (EC) of one inference sample is written as
\begin{equation}
\mathrm{EC}=E_{\mathrm{compute}}+E_{\mathrm{memory}}+E_{\mathrm{neuron}}.
\label{eq:ec}
\end{equation}
The compute term covers adding active weights to membrane potentials, the memory term covers reading these weights and transmitting spikes, and the neuron term covers membrane potential updates, threshold comparisons, and resets. Each term multiplies an operation count, which grows with the firing rate, by a per-operation energy, which grows with the bit width (\cref{sec:supp-energy}).

EC falls by \mbox{6.4--6.9$\times$} at W4V4 and \mbox{8.7--10.8$\times$} at W2V2 (\cref{tab:wv}). Unlike the footprint, which shrinks fully only when the membrane potentials are also quantized, EC falls almost entirely through weight quantization. Weight reads, whose cost is proportional to the bit width, dominate the full-precision EC, whereas the bit width of the membrane potentials enters only the neuron term, a small share of EC.

\ifdefined\ablsubw\else\newlength{\ablsubw}\newlength{\ablnumw}\fi
\providecommand{\ablnum}[1]{\makebox[\ablnumw][r]{#1}}
\providecolor{ecgray}{gray}{0.55}
\providecommand{\ablsub}[1]{\makebox[\ablsubw][l]{\textsubscript{\textcolor{ecgray}{#1}}}}
\begin{table}[t]
\centering
\footnotesize
\settowidth{\ablnumw}{00.00}\settowidth{\ablsubw}{\textsubscript{\textminus50.93}}
\setlength{\aboverulesep}{0pt}
\setlength{\belowrulesep}{0pt}
\renewcommand{\arraystretch}{1.12}
{\setlength{\tabcolsep}{1.60pt}%
\begin{tabular}{@{}cl*{4}{c}}
\toprule
& \multirow{2}{*}{Variant} & \multicolumn{2}{c}{CIFAR} & \multicolumn{2}{c}{ImageNet} \\
\cmidrule(lr){3-4}\cmidrule(lr){5-6}
&& 10 & 100 & SEW & SDT \\
\midrule
\multirow{3}{*}{W2} & \cellcolor{oursw}SpikeQ (Ours) & \cellcolor{oursw}\ablnum{90.28}\ablsub{} & \cellcolor{oursw}\ablnum{65.72}\ablsub{} & \cellcolor{oursw}\ablnum{43.70}\ablsub{} & \cellcolor{oursw}\ablnum{45.76}\ablsub{} \\
 & w/o $\mathcal L_{\mathrm{margin}}$ & \ablnum{89.86}\ablsub{\textminus0.42} & \ablnum{64.00}\ablsub{\textminus1.72} & \ablnum{40.25}\ablsub{\textminus3.45} & \ablnum{40.47}\ablsub{\textminus5.29} \\
 & w/o $\mathcal L_{\mathrm{pop}}$ & \ablnum{90.28}\ablsub{0.00} & \ablnum{65.30}\ablsub{\textminus0.42} & \ablnum{42.26}\ablsub{\textminus1.44} & \ablnum{44.88}\ablsub{\textminus0.88} \\
\midrule
\multirow{3}{*}{\makecell{Pre\\V2}} & \cellcolor{oursv}SpikeQ (Ours) & \cellcolor{oursv}\ablnum{91.36}\ablsub{} & \cellcolor{oursv}\ablnum{68.60}\ablsub{} & \cellcolor{oursv}\ablnum{48.32}\ablsub{} & \cellcolor{oursv}\ablnum{67.87}\ablsub{} \\
 & w/o anchoring & \ablnum{83.32}\ablsub{\textminus8.04} & \ablnum{44.92}\ablsub{\textminus23.68} & \ablnum{0.96}\ablsub{\textminus47.36} & \ablnum{17.26}\ablsub{\textminus50.61} \\
 & w/o reset-aware $\Delta_v$ & \ablnum{91.06}\ablsub{\textminus0.30} & \ablnum{68.50}\ablsub{\textminus0.10} & \ablnum{47.74}\ablsub{\textminus0.58} & \ablnum{67.34}\ablsub{\textminus0.53} \\
\midrule
\multirow{2}{*}{\makecell{Post\\V2}} & \cellcolor{oursv}SpikeQ (Ours) & \cellcolor{oursv}\ablnum{92.48}\ablsub{} & \cellcolor{oursv}\ablnum{72.96}\ablsub{} & \cellcolor{oursv}\ablnum{48.29}\ablsub{} & \cellcolor{oursv}\ablnum{67.88}\ablsub{} \\
 & MSE instead of $\mathcal L_v$ & \ablnum{92.49}\ablsub{+0.01} & \ablnum{72.99}\ablsub{+0.03} & \ablnum{8.07}\ablsub{\textminus40.22} & \ablnum{57.74}\ablsub{\textminus10.14} \\
\end{tabular}}\par\nointerlineskip
{\setlength{\tabcolsep}{2.35pt}%
\begin{tabular}{@{}cl*{3}{w{c}{46.00pt}}}
\midrule[\heavyrulewidth]
& Variant & VOC & ADE20K & COCO \\
\midrule
\multirow{3}{*}{W2} & \cellcolor{oursw}SpikeQ (Ours) & \cellcolor{oursw}\ablnum{65.74}\ablsub{} & \cellcolor{oursw}\ablnum{29.32}\ablsub{} & \cellcolor{oursw}\ablnum{21.30}\ablsub{} \\
 & w/o $\mathcal L_{\mathrm{margin}}$ & \ablnum{65.66}\ablsub{\textminus0.08} & \ablnum{28.04}\ablsub{\textminus1.28} & \ablnum{19.29}\ablsub{\textminus2.01} \\
 & w/o $\mathcal L_{\mathrm{pop}}$ & \ablnum{62.55}\ablsub{\textminus3.19} & \ablnum{26.14}\ablsub{\textminus3.18} & \ablnum{20.87}\ablsub{\textminus0.43} \\
\midrule
\multirow{3}{*}{\makecell{Pre\\V2}} & \cellcolor{oursv}SpikeQ (Ours) & \cellcolor{oursv}\ablnum{69.71}\ablsub{} & \cellcolor{oursv}\ablnum{33.44}\ablsub{} & \cellcolor{oursv}\ablnum{31.14}\ablsub{} \\
 & w/o anchoring & \ablnum{18.78}\ablsub{\textminus50.93} & \ablnum{2.96}\ablsub{\textminus30.48} & \ablnum{0.00}\ablsub{\textminus31.14} \\
 & w/o reset-aware $\Delta_v$ & \ablnum{68.98}\ablsub{\textminus0.73} & \ablnum{33.31}\ablsub{\textminus0.13} & \ablnum{31.07}\ablsub{\textminus0.07} \\
\midrule
\multirow{2}{*}{\makecell{Post\\V2}} & \cellcolor{oursv}SpikeQ (Ours) & \cellcolor{oursv}\ablnum{69.79}\ablsub{} & \cellcolor{oursv}\ablnum{33.67}\ablsub{} & \cellcolor{oursv}\ablnum{31.24}\ablsub{} \\
 & MSE instead of $\mathcal L_v$ & \ablnum{65.57}\ablsub{\textminus4.22} & \ablnum{31.73}\ablsub{\textminus1.94} & \ablnum{30.33}\ablsub{\textminus0.91} \\
\bottomrule
\end{tabular}}
\caption{Ablation of SpikeQ at W2 (full-precision membrane potentials) and V2 (full-precision weights). Pre V2 and Post V2 use pre-fire and post-reset quantization. Subscripts give the change from SpikeQ. Top-1 accuracy, mIoU, and AP (\%).}
\label{tab:ablation}
\end{table}

\subsection{Ablation Study}
\label{sec:ablation}
\Cref{tab:ablation} removes or changes one component of SpikeQ at a time, at W2 for weights and at V2 for membrane potentials.

\paragraph{Weight terms.}
The margin and population terms matter on different tasks. Removing the margin term costs up to 5.29 points on classification and detection, and the cost grows with task complexity, like the gain of SpikeQ in \cref{sec:exp-w}, since this term helps correct wrong spikes that a complex task has less redundancy to absorb. The population term matters more on segmentation, where removing it costs about 3.2 mIoU.

\paragraph{Pre-fire grid rules.}
Without threshold anchoring, the grid has a level at $\theta$, so membrane potentials within half a step below $\theta$ are rounded up to $\theta$ and fire. Although $\mathcal L_v$ still selects the step by flipped spikes, SpikeQ collapses on every hard-reset benchmark, so anchoring is what prevents the collapse. Without the reset-aware step, each reset type uses the step rule of the other, which slightly lowers accuracy on most benchmarks.

\paragraph{Post-reset calibration.}
Under post-reset quantization, replacing $\mathcal L_v$ with the MSE among the same candidates lowers the accuracy of SpikeQ on every hard-reset benchmark by up to 40.22 points. The MSE often selects a grid that rounds the membrane potentials just below $\theta$ to zero, so their neurons miss their spikes at the next time step. On the soft-reset CIFAR models, both criteria are equally accurate, as no candidate grid there rounds these membrane potentials to zero.

\section{Conclusion}
This paper shows that quantization errors reduce the accuracy of SNNs through the spikes they flip rather than through their magnitude. SpikeQ therefore selects the weight levels that cause the fewest spike errors. Its membrane grid makes the threshold a rounding boundary, so rounding the membrane potential before firing never moves it across the threshold, and matches its levels to the reset rule, so the post-reset state needs no further rounding. Across image classification, semantic segmentation, and object detection, SpikeQ outperforms existing methods in most settings.

\clearpage
{\small
\bibliographystyle{ieeenat_fullname}
\bibliography{aaai2027}
}

\clearpage
\maketitlesupplementary
\appendix
\raggedbottom
\setcounter{table}{0}
\renewcommand{\thetable}{S\arabic{table}}
\setcounter{figure}{0}
\renewcommand{\thefigure}{S\arabic{figure}}
\renewcommand{\floatpagefraction}{0.8}
\makeatletter\setlength{\@fptop}{0pt}\makeatother
\begin{table*}[b]
\centering
\footnotesize
\setlength{\aboverulesep}{0pt}
\setlength{\belowrulesep}{0pt}
\renewcommand{\arraystretch}{1.12}
{\setlength{\tabcolsep}{15.19pt}%
\begin{tabular}{l*{8}{c}}
\toprule
\multirow{2}{*}{W3} & \multicolumn{2}{c}{CIFAR} & \multicolumn{2}{c}{ImageNet} & \multirow{2}{*}{VOC 2012} & \multirow{2}{*}{ADE20K} & \multicolumn{2}{c}{COCO} \\
\cmidrule(lr){2-3}\cmidrule(lr){4-5}\cmidrule(lr){8-9}
& 10 & 100 & SEW & SDT & & & AP & AP$_{50}$ \\
\midrule
FP & 92.60 & 73.78 & 62.89 & 72.04 & 70.86 & 34.91 & 31.38 & 50.86 \\
\midrule
RTN & 90.37 & 42.91 & 0.94 & 0.40 & 56.40 & 23.75 & 5.72 & 15.16 \\
FlowQ~\cite{flowq} & 85.39 & 30.23 & 0.20 & 1.55 & 9.24 & 2.28 & 0.48 & 1.31 \\
GPTQ~\cite{frantar2023gptq} & 91.81 & 69.28 & 57.20 & 57.96 & \tsecond{69.69} & 32.88 & 26.20 & 48.12 \\
SBC~\cite{shi2026sbc} & 91.94 & 69.34 & 57.20 & 58.31 & \tbest{69.85} & \tsecond{32.96} & \tsecond{26.83} & \tsecond{48.22} \\
GPTAQ~\cite{li2025gptaq} & \tsecond{91.97} & \tbest{71.55} & \tsecond{58.82} & \tsecond{61.75} & 69.24 & 32.53 & 24.89 & 46.01 \\
BRECQ~\cite{li2021brecq} & 91.50 & 69.76 & 57.21 & 44.74 & 65.92 & 26.41 & 22.71 & 42.95 \\
PTQ4SNN~\cite{xie2026ptq4snn} & 91.65 & 69.91 & 57.14 & 44.62 & 66.20 & 26.01 & 22.69 & 43.12 \\
\cellcolor{oursw}SpikeQ (Ours) & \cellcolor{oursw}\tbest{92.14} & \cellcolor{oursw}\tsecond{71.54} & \cellcolor{oursw}\tbest{59.13} & \cellcolor{oursw}\tbest{67.55} & \cellcolor{oursw}69.68 & \cellcolor{oursw}\tbest{33.37} & \cellcolor{oursw}\tbest{27.94} & \cellcolor{oursw}\tbest{48.43} \\
\end{tabular}}\par\nointerlineskip
{\setlength{\tabcolsep}{10.32pt}%
\begin{tabular}{l*{7}{c}}
\midrule[\heavyrulewidth]
\multirow{2}{*}{V3} & \multicolumn{2}{c}{CIFAR} & \multicolumn{2}{c}{ImageNet} & \multirow{2}{*}{VOC 2012} & \multirow{2}{*}{ADE20K} & \multirow{2}{*}{COCO AP} \\
\cmidrule(lr){2-3}\cmidrule(lr){4-5}
& 10 & 100 & SEW & SDT & & & \\
\midrule
FP & 92.60 & 73.78 & 62.89 & 72.04 & 70.86 & 34.91 & 31.38 \\
\midrule
MinMax~\cite{jacob2018quantization} & 10.06/81.03 & 1.22/66.69 & 0.10/0.78 & 0.11/45.38 & 0.81/57.24 & 0.11/14.97 & 0.00/29.63 \\
MSE~\cite{nagel2021whitepaper} & 41.88/92.08 & 48.45/72.50 & 0.10/0.11 & 0.10/51.22 & 3.30/68.22 & 0.02/31.84 & 0.00/30.44 \\
FlowQ-M~\cite{flowq} & 57.51/92.57 & 18.87/73.36 & 0.13/17.20 & 2.26/70.49 & 5.05/69.02 & 2.38/33.98 & 1.43/31.00 \\
SQUAT~\cite{venkatesh2024squat} & 43.98/92.43 & 24.31/72.46 & 0.16/43.24 & 0.12/71.17 & 1.44/69.49 & 0.04/34.08 & 0.07/\tsecond{31.31} \\
PTQ4SNN-M$^\dagger$~\cite{xie2026ptq4snn} & 77.40/\tbest{92.78} & 23.14/73.37 & 1.22/29.33 & 18.11/65.92 & 17.06/70.48 & \tsecond{9.21}/\tbest{34.72} & 14.98/31.12 \\
\quad+MPBA$^\dagger$~\cite{xie2026ptq4snn} & 10.00/91.96 & 2.15/72.65 & 1.67/24.57 & 0.77/69.28 & 3.58/70.27 & 0.36/34.41 & 3.59/31.09 \\
\cellcolor{oursv}SpikeQ (Ours) & \cellcolor{oursv}\tbest{92.59}/92.60 & \cellcolor{oursv}\tbest{72.87}/\tbest{73.89} & \cellcolor{oursv}\tsecond{58.08}/\tsecond{58.08} & \cellcolor{oursv}\tsecond{71.43}/\tsecond{71.40} & \cellcolor{oursv}\tsecond{70.54}/\tsecond{70.61} & \cellcolor{oursv}\tbest{34.45}/34.48 & \cellcolor{oursv}\tsecond{31.35}/31.30 \\
\cellcolor{oursv}SpikeQ$^\dagger$ (Ours) & \cellcolor{oursv}\tsecond{92.48}/\tsecond{92.72} & \cellcolor{oursv}\tsecond{72.85}/\tsecond{73.78} & \cellcolor{oursv}\tbest{62.18}/\tbest{62.11} & \cellcolor{oursv}\tbest{71.53}/\tbest{71.76} & \cellcolor{oursv}\tbest{70.92}/\tbest{70.74} & \cellcolor{oursv}\tbest{34.45}/\tsecond{34.59} & \cellcolor{oursv}\tbest{31.46}/\tbest{31.35} \\
\bottomrule
\end{tabular}}
\caption{Weight quantization (W3, top) with full-precision membrane potentials and membrane quantization (V3, bottom) with full-precision weights. Top-1 accuracy, mIoU, and AP (\%). Notation as in \cref{tab:cls-w,tab:cls-v}.}
\label{tab:supp-3bit}
\end{table*}

\section{Implementation Details}
\label{sec:supp-impl}
\paragraph{Models and calibration.}
All experiments run on NVIDIA A100 GPUs with 80\,GB of memory. The CIFAR models and both SDT models use LIF neurons, and SEW-ResNet-18 uses IF neurons. Weight calibration uses a single seed, and on CIFAR-10 at W2, SpikeQ averages $90.52\pm0.19$ over three calibration seeds. All COCO results, including full precision, are evaluated in FP32 with TF32 disabled.

\paragraph{Weight quantization.}
Weights are quantized per output channel. GPTQ and SBC use their default min-max grids, and GPTAQ uses $\alpha=0.25$, the default of its official implementation. BRECQ, PTQ4SNN, and SpikeQ share the same clipping search for the weight grid, which minimizes the $L_{2.4}$ weight error, so they differ only in the selected levels. BRECQ and PTQ4SNN use 20k reconstruction iterations (5k on COCO). On COCO, all methods keep the first layer and the detection head at 8 bits with round-to-nearest.

\paragraph{SpikeQ weight search.}
After each sweep, SpikeQ updates
\[
\mu\leftarrow\mathrm{clip}\Bigl(\mu+\frac{0.5\,\mathcal L_{\mathrm{pop}}}{\mathcal L_{\mathrm{count}}+\lambda\mathcal L_{\mathrm{margin}}},-0.9,0.9\Bigr),
\]
where dividing by the total error makes the step independent of the layer size. The held-out part is 25\% of the calibration images. SpikeQ uses $\rho=0.5$ at W2, $\rho=0.25$ at W3 and W4, $\lambda=1$, four draws of $\varepsilon$, and at most six sweeps in the second phase.

\paragraph{Simulated layers.}
The simulated input current includes batch normalization, channel-wise pooling, and residual inputs between a layer and its neurons. On large feature maps, SpikeQ samples spatial positions but keeps all time steps. When a channel drives no spiking neurons of its own, as in the non-spiking classifier, SpikeQ keeps the search but replaces \cref{eq:weight-loss} with the output error $\mathcal L_{\mathrm{out}}=\sum_t\|(\widehat W^{\ell}-W^{\ell})\widehat{\mathbf S}^{\ell-1}_t\|^2$.

\paragraph{Membrane quantization.}
Membrane grids are calibrated on 256 images, and for segmentation we sample 16{,}384 neurons per image and layer. MinMax, MSE, FlowQ-M, SQUAT, and SpikeQ are layer-wise. PTQ4SNN-M uses channel-wise scales, and SpikeQ$^\dagger$ lets each channel select its own grid from the same threshold-anchored candidates. +MPBA mixes 2, 4, and 8 bits, and its V2 results use 2.3 bits on average. Under post-reset quantization, our grids keep the same candidates and selection rule, and MinMax and MSE calibrate their ranges on post-reset membrane potentials. For W$b$V$b$, the membrane grids are calibrated on the weight-quantized network (\cref{sec:membrane-method}).

\section{Proof of Reset-Aware Level Allocation}
\label{sec:supp-grid}
Reset-aware level allocation (\cref{sec:membrane-method}) sets $\Delta_v$ so that the post-reset state is a level of $\mathcal G_V$ and needs no further rounding. We prove this below.
\begin{proposition}
Under pre-fire quantization with the grid of \cref{eq:anchored-grid}, $\Delta_v$ from \cref{eq:reset-grid}, and the stated ranges of $m$, the post-reset state is a level of $\mathcal G_V$ under both hard and soft reset, and the $m$ levels $g_{n-m},\ldots,g_{n-1}$ are those between $0$ and $\theta$.
\end{proposition}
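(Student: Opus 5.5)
The plan is a direct computation from the closed form \cref{eq:anchored-grid}, $g_j=\theta+(j-n+\tfrac12)\Delta_v$, handled separately for the two reset rules. One preliminary fact is used throughout. Under pre-fire quantization the stored value $\widehat V_t=Q_v(V_t)$ is always some level $g_j$, since $Q_v$ rounds to the nearest level and clips to the endpoints. Threshold anchoring places $\theta$ strictly between $g_{n-1}=\theta-\tfrac12\Delta_v$ and $g_n=\theta+\tfrac12\Delta_v$. Hence a spike occurs, $\widehat V_t=g_j\ge\theta$, exactly when $j\ge n$. In both cases, if no spike occurs the post-reset state is $\widehat V_t$ itself, which is already a level. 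So only the spiking branch needs work.

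\emph{Hard reset.} Here $n=K-1$ and $\Delta_v=\theta/(m+\tfrac12)$. After a spike the post-reset state is $0$, so I would show that $0$ is a level. I solve $g_j=0$, which is $j-n+\tfrac12=-(m+\tfrac12)$, giving $j=n-m-1=K-2-m$. The stated range $m\in\{0,\ldots,K-2\}$ places this index in $\{0,\ldots,K-2\}$, so $g_{n-m-1}=0$ lies on the grid. The levels strictly between $0$ and $\theta$ are then those with $n-m-1<j<n$, because $g$ is increasing, $g_{n-m-1}=0$, and $g_{n-1}<\theta<g_n$. These are exactly the $m$ levels $g_{n-m},\ldots,g_{n-1}$.

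\emph{Soft reset.} Here $\Delta_v=\theta/m$, so $\theta=m\Delta_v$ and
\[
g_j-\theta=\theta+(j-m-n+\tfrac12)\Delta_v=g_{j-m}.
\]
A spike means $\widehat V_t=g_j$ with $j\ge n$, so the post-reset state $g_j-\theta=g_{j-m}$ has index $j-m\ge n-m\ge 0$ by $m\le n$. It also has index at most $K-1-m\le K-1$. So it is a level. For the second claim, $g_j>0$ is equivalent to $j-n+\tfrac12>-m$, that is $j>n-m-\tfrac12$, and since $j$ is an integer this means $j\ge n-m$. Likewise $g_j<\theta$ is equivalent to $j\le n-1$. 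This again yields exactly $g_{n-m},\ldots,g_{n-1}$. Note that here $0$ itself is not a level, since it sits at a half-step offset, so "between" is strict with no boundary issue.

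There is no real obstacle. The only points needing care are the index-range checks that $n-m-1\ge0$ (hard reset) and $j-m\ge0$ (soft reset), which is precisely where the stated ranges of $m$ are used. One should also remember that clipping still returns a level, so the non-spiking branch is trivially on the grid.
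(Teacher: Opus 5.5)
Your proof is correct and follows essentially the same route as the paper: the non-spiking branch is trivially a level, $g_{n-m-1}=0$ under hard reset, and $g_j-\theta=g_{j-m}$ under soft reset, with the stated ranges of $m$ supplying the index checks. The only difference is cosmetic: you find the levels in $(0,\theta)$ by solving $0<g_j<\theta$ directly, whereas the paper notes that $g_{n-m}$ is the lowest level above $0$, equal to $\Delta_v$ under hard reset and $\tfrac12\Delta_v$ under soft reset.
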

\begin{proof}
Without a spike, the post-reset state is $\widehat V_t=Q_v(V_t)$, which is a level. With a spike, $\widehat V_t=g_j$ for some $j\geq n$, since $g_n$ is the lowest level above $\theta$.

Under hard reset, the post-reset state is $0$. With $\Delta_v=\theta/(m+\tfrac12)$, \cref{eq:anchored-grid} gives $g_{n-m-1}=\theta-(m+\tfrac12)\Delta_v=0$, and $n=K-1$ with $m\leq K-2$ ensures $n-m-1\geq0$.

Under soft reset, the post-reset state is $g_j-\theta$. With $\theta=m\Delta_v$, \cref{eq:anchored-grid} gives $g_j-\theta=g_{j-m}$, and $m\leq n\leq j$ ensures $j-m\geq0$. In both cases, the lowest level above $0$ is $g_{n-m}$, which equals $\Delta_v$ under hard reset and $\tfrac12\Delta_v$ under soft reset, so the $m$ levels $g_{n-m},\ldots,g_{n-1}$ are those between $0$ and $\theta$.
\end{proof}

\section{Results at 3 Bits}
\label{sec:supp-3bit}
\Cref{tab:supp-3bit} reports the 3-bit results omitted from the main text, which follow the trends at 2 and 4 bits. SpikeQ gives the most accurate weight quantization in six of the eight W3 columns. Under pre-fire quantization, every baseline stays at least 15 points below full precision at V3, whereas our layer-wise grids stay within 0.91 points of full precision except on SEW-ResNet-18, where SpikeQ$^\dagger$ reaches 62.18\%.

\section{Flipped Spikes and Accuracy}
\label{sec:supp-flip}
\begin{figure}[p]
\centering
\includegraphics{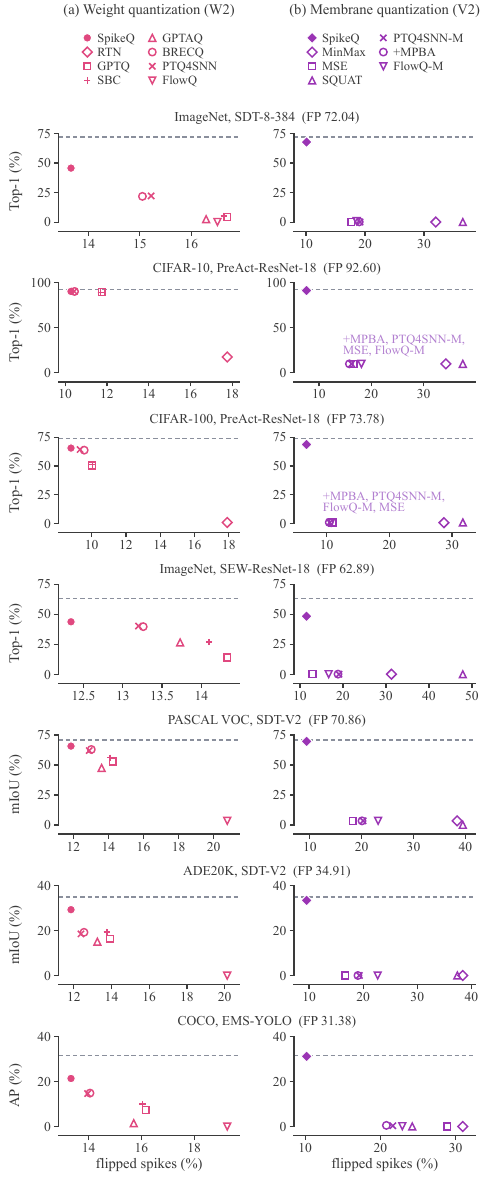}
\caption{Accuracy versus flipped spikes at 2~bits. (a)~Weight quantization (W2) with full-precision membrane potentials. (b)~Pre-fire membrane quantization (V2) with full-precision weights. Flipped spikes are counted against the full-precision network over all neurons and time steps.}
\label{fig:supp-flip-2bit}
\end{figure}
\begin{figure}[p]
\centering
\includegraphics{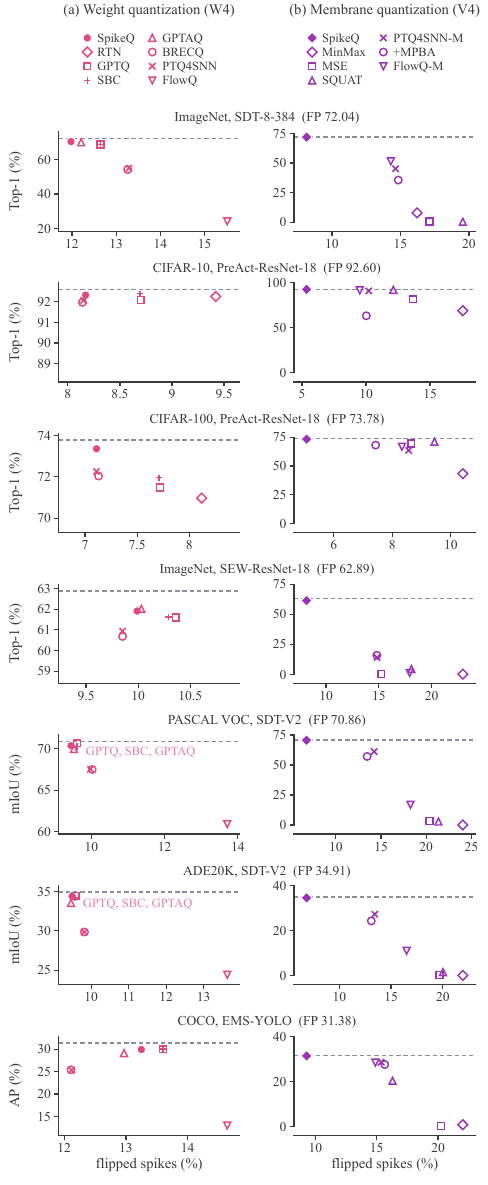}
\caption{Accuracy versus flipped spikes at 4~bits, with panels as in \cref{fig:supp-flip-2bit}. The first row repeats \cref{fig:flip-acc}, and the y-axes of (a) do not start at zero.}
\label{fig:supp-flip-4bit}
\end{figure}
\paragraph{Measurement.}
\Cref{fig:supp-flip-2bit,fig:supp-flip-4bit} extend \cref{fig:flip-acc} to all seven benchmarks at 2 and 4~bits. Flipped spikes are counted against the full-precision network over all neurons and time steps, so they include flips passed on from earlier layers and time steps. Flipped spikes are measured on 2{,}000 evaluation images for CIFAR, 1{,}000 for SEW-ResNet-18, 500 for SDT, and 200 for VOC, ADE20K, and COCO, and accuracy on the full evaluation sets.

\paragraph{Weight quantization.}
At W2, SpikeQ flips the fewest spikes on every benchmark, and accuracy falls as flipped spikes grow (\cref{fig:supp-flip-2bit}, \cref{sec:exp-w}). At W4, the fine step keeps rounding errors small, so most methods flip similar numbers of spikes and reach similar accuracy (\cref{fig:supp-flip-4bit}, \cref{sec:exp-w}). Accuracy drops mainly for the methods that flip more spikes.

\paragraph{Membrane quantization.}
SpikeQ flips the fewest spikes on every benchmark at both V2 and V4, since threshold anchoring removes the band of membrane potentials rounded across $\theta$ (\cref{sec:exp-v}). At V2, every existing grid collapses, so these panels show no graded trend, only that SpikeQ alone keeps most of the full-precision accuracy. At V4, the finer step narrows the band, existing grids partly recover, and accuracy falls as flipped spikes grow on ImageNet, segmentation, and detection. On CIFAR, the accuracy of the existing grids is not ordered by their flipped spikes.

\section{Energy Consumption}
\label{sec:supp-energy}
\begin{table}[t]
\centering
\footnotesize
\setlength{\aboverulesep}{0pt}
\setlength{\belowrulesep}{0pt}
\renewcommand{\arraystretch}{1.12}
\setlength{\tabcolsep}{3.50pt}
\begin{tabular}{llc}
\toprule
Symbol & Operation & Energy (pJ) \\
\midrule
$e_{\mathrm{add}}(32)$ & synaptic addition, 32 bits (floating point) & 0.900 \\
$e_{\mathrm{add}}(4)$ & synaptic addition, 4 bits & 0.0502 \\
$e_{\mathrm{add}}(2)$ & synaptic addition, 2 bits & 0.0477 \\
$e_{\mathrm{wread}}$ & weight read, per bit & 0.0985 \\
$e_{\mathrm{spike}}$ & spike transmission & 0.180 \\
$e_{\mathrm{state}}(b)$ & membrane update, comparison, or reset, $b$ bits & $e_{\mathrm{add}}(b)$ \\
$e_{\mathrm{leak}}$ & leakage, per neuron and time step & 0.002 \\
\bottomrule
\end{tabular}
\caption{Energy of one operation in \cref{eq:energy}.}
\label{tab:energy-constants}
\end{table}

\providecommand{\ebn}[2]{\makebox[#1][r]{#2}}
\ifdefined\ebwA\else\newlength{\ebwA}\fi
\ifdefined\ebwB\else\newlength{\ebwB}\fi
\ifdefined\ebwC\else\newlength{\ebwC}\fi
\ifdefined\ebwD\else\newlength{\ebwD}\fi
\ifdefined\ebwE\else\newlength{\ebwE}\fi
\ifdefined\ebwF\else\newlength{\ebwF}\fi
\ifdefined\ebwG\else\newlength{\ebwG}\fi
\ifdefined\ebwH\else\newlength{\ebwH}\fi
\begin{table*}[tp]
\centering
\footnotesize
\setlength{\aboverulesep}{0pt}
\setlength{\belowrulesep}{0pt}
\renewcommand{\arraystretch}{1.12}
\settowidth{\ebwA}{000{,}000}\settowidth{\ebwB}{000{,}000}\settowidth{\ebwC}{0.00}\settowidth{\ebwD}{000{,}000}\settowidth{\ebwE}{00.00}\settowidth{\ebwF}{00.0}\settowidth{\ebwG}{0.000}\settowidth{\ebwH}{00.0}
\setlength{\tabcolsep}{4.20pt}
\begin{tabular}{@{}ll*{11}{c}@{}}
\toprule
\multirow{2}{*}{Benchmark} & \multirow{2}{*}{Setting} & \multirow{2}{*}{FR (\%)} & \multicolumn{4}{c}{Energy ($\mu$J)} & \multirow{2}{*}{\makecell{Reduction\\[-1pt]($\times$)}} & \multicolumn{2}{c}{Share of EC (\%)} & \multicolumn{3}{c}{Size (MB)} \\
\cmidrule(lr){4-7}\cmidrule(lr){9-10}\cmidrule(lr){11-13}
&&& Compute & Memory & Neuron & EC & & Memory & Non-spiking & Weights & Membrane & Footprint \\
\midrule
\multirow{7}{*}{\makecell[l]{CIFAR-10\\[-1pt]{\scriptsize PreAct-ResNet-18}}} & FP & 20.55 & \ebn{\ebwA}{331} & \ebn{\ebwB}{1{,}227} & \ebn{\ebwC}{2.42} & \ebn{\ebwD}{1{,}560} & \ebn{\ebwE}{} & 78.6 & 1.9 & \ebn{\ebwF}{42.6} & \ebn{\ebwG}{2.13} & \ebn{\ebwH}{44.8} \\
 & W4 & 20.50 & \ebn{\ebwA}{18.4} & \ebn{\ebwB}{211} & \ebn{\ebwC}{2.42} & \ebn{\ebwD}{231} & \ebn{\ebwE}{\textcolor[gray]{0.55}{6.75}} & 91.0 & 1.9 & \ebn{\ebwF}{5.39} & \ebn{\ebwG}{2.13} & \ebn{\ebwH}{7.52} \\
 & V4 & 20.51 & \ebn{\ebwA}{330} & \ebn{\ebwB}{1{,}223} & \ebn{\ebwC}{0.14} & \ebn{\ebwD}{1{,}554} & \ebn{\ebwE}{\textcolor[gray]{0.55}{1.00}} & 78.7 & 1.9 & \ebn{\ebwF}{42.6} & \ebn{\ebwG}{0.266} & \ebn{\ebwH}{42.9} \\
 & W4V4 & 20.46 & \ebn{\ebwA}{18.4} & \ebn{\ebwB}{210} & \ebn{\ebwC}{0.14} & \ebn{\ebwD}{228} & \ebn{\ebwE}{\textcolor[gray]{0.55}{6.83}} & 91.9 & 1.9 & \ebn{\ebwF}{5.39} & \ebn{\ebwG}{0.266} & \ebn{\ebwH}{5.66} \\
 & W2 & 20.27 & \ebn{\ebwA}{17.3} & \ebn{\ebwB}{136} & \ebn{\ebwC}{2.42} & \ebn{\ebwD}{156} & \ebn{\ebwE}{\textcolor[gray]{0.55}{10.00}} & 87.4 & 1.9 & \ebn{\ebwF}{2.73} & \ebn{\ebwG}{2.13} & \ebn{\ebwH}{4.86} \\
 & V2 & 19.99 & \ebn{\ebwA}{320} & \ebn{\ebwB}{1{,}183} & \ebn{\ebwC}{0.13} & \ebn{\ebwD}{1{,}502} & \ebn{\ebwE}{\textcolor[gray]{0.55}{1.04}} & 78.7 & 2.0 & \ebn{\ebwF}{42.6} & \ebn{\ebwG}{0.133} & \ebn{\ebwH}{42.8} \\
 & W2V2 & 19.50 & \ebn{\ebwA}{16.5} & \ebn{\ebwB}{130} & \ebn{\ebwC}{0.13} & \ebn{\ebwD}{147} & \ebn{\ebwE}{\textcolor[gray]{0.55}{10.61}} & 88.6 & 2.0 & \ebn{\ebwF}{2.73} & \ebn{\ebwG}{0.133} & \ebn{\ebwH}{2.86} \\
\midrule
\multirow{7}{*}{\makecell[l]{CIFAR-100\\[-1pt]{\scriptsize PreAct-ResNet-18}}} & FP & 9.48 & \ebn{\ebwA}{189} & \ebn{\ebwB}{698} & \ebn{\ebwC}{2.20} & \ebn{\ebwD}{889} & \ebn{\ebwE}{} & 78.5 & 3.4 & \ebn{\ebwF}{42.8} & \ebn{\ebwG}{2.13} & \ebn{\ebwH}{44.9} \\
 & W4 & 9.43 & \ebn{\ebwA}{10.4} & \ebn{\ebwB}{119} & \ebn{\ebwC}{2.20} & \ebn{\ebwD}{132} & \ebn{\ebwE}{\textcolor[gray]{0.55}{6.74}} & 90.4 & 3.3 & \ebn{\ebwF}{5.41} & \ebn{\ebwG}{2.13} & \ebn{\ebwH}{7.54} \\
 & V4 & 9.46 & \ebn{\ebwA}{188} & \ebn{\ebwB}{695} & \ebn{\ebwC}{0.13} & \ebn{\ebwD}{883} & \ebn{\ebwE}{\textcolor[gray]{0.55}{1.01}} & 78.7 & 3.4 & \ebn{\ebwF}{42.8} & \ebn{\ebwG}{0.266} & \ebn{\ebwH}{43.1} \\
 & W4V4 & 9.41 & \ebn{\ebwA}{10.4} & \ebn{\ebwB}{119} & \ebn{\ebwC}{0.13} & \ebn{\ebwD}{129} & \ebn{\ebwE}{\textcolor[gray]{0.55}{6.88}} & 91.9 & 3.4 & \ebn{\ebwF}{5.41} & \ebn{\ebwG}{0.266} & \ebn{\ebwH}{5.68} \\
 & W2 & 9.28 & \ebn{\ebwA}{9.6} & \ebn{\ebwB}{76.2} & \ebn{\ebwC}{2.20} & \ebn{\ebwD}{88.0} & \ebn{\ebwE}{\textcolor[gray]{0.55}{10.10}} & 86.6 & 3.4 & \ebn{\ebwF}{2.74} & \ebn{\ebwG}{2.13} & \ebn{\ebwH}{4.87} \\
 & V2 & 9.10 & \ebn{\ebwA}{180} & \ebn{\ebwB}{666} & \ebn{\ebwC}{0.12} & \ebn{\ebwD}{846} & \ebn{\ebwE}{\textcolor[gray]{0.55}{1.05}} & 78.7 & 3.6 & \ebn{\ebwF}{42.8} & \ebn{\ebwG}{0.133} & \ebn{\ebwH}{42.9} \\
 & W2V2 & 8.90 & \ebn{\ebwA}{9.3} & \ebn{\ebwB}{73.2} & \ebn{\ebwC}{0.12} & \ebn{\ebwD}{82.6} & \ebn{\ebwE}{\textcolor[gray]{0.55}{10.76}} & 88.6 & 3.6 & \ebn{\ebwF}{2.74} & \ebn{\ebwG}{0.133} & \ebn{\ebwH}{2.87} \\
\midrule
\multirow{7}{*}{\makecell[l]{ImageNet\\[-1pt]{\scriptsize SEW-ResNet-18}}} & FP & 12.95 & \ebn{\ebwA}{1{,}264} & \ebn{\ebwB}{4{,}681} & \ebn{\ebwC}{10.1} & \ebn{\ebwD}{5{,}955} & \ebn{\ebwE}{} & 78.6 & 8.4 & \ebn{\ebwF}{44.6} & \ebn{\ebwG}{9.48} & \ebn{\ebwH}{54.1} \\
 & W4 & 12.91 & \ebn{\ebwA}{70.3} & \ebn{\ebwB}{804} & \ebn{\ebwC}{10.1} & \ebn{\ebwD}{884} & \ebn{\ebwE}{\textcolor[gray]{0.55}{6.74}} & 90.9 & 8.4 & \ebn{\ebwF}{5.63} & \ebn{\ebwG}{9.48} & \ebn{\ebwH}{15.1} \\
 & V4 & 12.96 & \ebn{\ebwA}{1{,}265} & \ebn{\ebwB}{4{,}681} & \ebn{\ebwC}{0.58} & \ebn{\ebwD}{5{,}946} & \ebn{\ebwE}{\textcolor[gray]{0.55}{1.00}} & 78.7 & 8.4 & \ebn{\ebwF}{44.6} & \ebn{\ebwG}{1.18} & \ebn{\ebwH}{45.8} \\
 & W4V4 & 12.92 & \ebn{\ebwA}{70.3} & \ebn{\ebwB}{804} & \ebn{\ebwC}{0.58} & \ebn{\ebwD}{875} & \ebn{\ebwE}{\textcolor[gray]{0.55}{6.81}} & 91.9 & 8.5 & \ebn{\ebwF}{5.63} & \ebn{\ebwG}{1.18} & \ebn{\ebwH}{6.81} \\
 & W2 & 12.53 & \ebn{\ebwA}{64.7} & \ebn{\ebwB}{511} & \ebn{\ebwC}{10.1} & \ebn{\ebwD}{586} & \ebn{\ebwE}{\textcolor[gray]{0.55}{10.17}} & 87.2 & 8.6 & \ebn{\ebwF}{2.85} & \ebn{\ebwG}{9.48} & \ebn{\ebwH}{12.3} \\
 & V2 & 13.16 & \ebn{\ebwA}{1{,}248} & \ebn{\ebwB}{4{,}619} & \ebn{\ebwC}{0.56} & \ebn{\ebwD}{5{,}867} & \ebn{\ebwE}{\textcolor[gray]{0.55}{1.01}} & 78.7 & 8.5 & \ebn{\ebwF}{44.6} & \ebn{\ebwG}{0.592} & \ebn{\ebwH}{45.2} \\
 & W2V2 & 12.84 & \ebn{\ebwA}{64.6} & \ebn{\ebwB}{510} & \ebn{\ebwC}{0.55} & \ebn{\ebwD}{576} & \ebn{\ebwE}{\textcolor[gray]{0.55}{10.35}} & 88.7 & 8.7 & \ebn{\ebwF}{2.85} & \ebn{\ebwG}{0.592} & \ebn{\ebwH}{3.44} \\
\midrule
\multirow{7}{*}{\makecell[l]{ImageNet\\[-1pt]{\scriptsize SDT-8-384}}} & FP & 10.93 & \ebn{\ebwA}{3{,}908} & \ebn{\ebwB}{14{,}467} & \ebn{\ebwC}{38.9} & \ebn{\ebwD}{18{,}413} & \ebn{\ebwE}{} & 78.6 & 6.0 & \ebn{\ebwF}{64.2} & \ebn{\ebwG}{37.1} & \ebn{\ebwH}{101} \\
 & W4 & 10.91 & \ebn{\ebwA}{217} & \ebn{\ebwB}{2{,}485} & \ebn{\ebwC}{38.9} & \ebn{\ebwD}{2{,}741} & \ebn{\ebwE}{\textcolor[gray]{0.55}{6.72}} & 90.7 & 5.9 & \ebn{\ebwF}{8.39} & \ebn{\ebwG}{37.1} & \ebn{\ebwH}{45.4} \\
 & V4 & 10.97 & \ebn{\ebwA}{3{,}916} & \ebn{\ebwB}{14{,}497} & \ebn{\ebwC}{2.24} & \ebn{\ebwD}{18{,}415} & \ebn{\ebwE}{\textcolor[gray]{0.55}{1.00}} & 78.7 & 6.0 & \ebn{\ebwF}{64.2} & \ebn{\ebwG}{4.63} & \ebn{\ebwH}{68.8} \\
 & W4V4 & 10.95 & \ebn{\ebwA}{218} & \ebn{\ebwB}{2{,}489} & \ebn{\ebwC}{2.24} & \ebn{\ebwD}{2{,}709} & \ebn{\ebwE}{\textcolor[gray]{0.55}{6.80}} & 91.9 & 6.0 & \ebn{\ebwF}{8.39} & \ebn{\ebwG}{4.63} & \ebn{\ebwH}{13.0} \\
 & W2 & 10.69 & \ebn{\ebwA}{203} & \ebn{\ebwB}{1{,}606} & \ebn{\ebwC}{38.8} & \ebn{\ebwD}{1{,}848} & \ebn{\ebwE}{\textcolor[gray]{0.55}{9.97}} & 86.9 & 6.0 & \ebn{\ebwF}{4.40} & \ebn{\ebwG}{37.1} & \ebn{\ebwH}{41.4} \\
 & V2 & 10.61 & \ebn{\ebwA}{3{,}795} & \ebn{\ebwB}{14{,}050} & \ebn{\ebwC}{2.13} & \ebn{\ebwD}{17{,}848} & \ebn{\ebwE}{\textcolor[gray]{0.55}{1.03}} & 78.7 & 6.2 & \ebn{\ebwF}{64.2} & \ebn{\ebwG}{2.32} & \ebn{\ebwH}{66.5} \\
 & W2V2 & 10.34 & \ebn{\ebwA}{197} & \ebn{\ebwB}{1{,}559} & \ebn{\ebwC}{2.12} & \ebn{\ebwD}{1{,}759} & \ebn{\ebwE}{\textcolor[gray]{0.55}{10.47}} & 88.7 & 6.3 & \ebn{\ebwF}{4.40} & \ebn{\ebwG}{2.32} & \ebn{\ebwH}{6.72} \\
\midrule
\multirow{7}{*}{\makecell[l]{VOC 2012\\[-1pt]{\scriptsize SDT-V2}}} & FP & 16.60 & \ebn{\ebwA}{89{,}357} & \ebn{\ebwB}{330{,}819} & \ebn{\ebwC}{400} & \ebn{\ebwD}{420{,}575} & \ebn{\ebwE}{} & 78.7 & 35.9 & \ebn{\ebwF}{224} & \ebn{\ebwG}{362} & \ebn{\ebwH}{586} \\
 & W4 & 16.54 & \ebn{\ebwA}{4{,}976} & \ebn{\ebwB}{56{,}899} & \ebn{\ebwC}{399} & \ebn{\ebwD}{62{,}274} & \ebn{\ebwE}{\textcolor[gray]{0.55}{6.75}} & 91.4 & 35.8 & \ebn{\ebwF}{28.9} & \ebn{\ebwG}{362} & \ebn{\ebwH}{391} \\
 & V4 & 16.69 & \ebn{\ebwA}{89{,}660} & \ebn{\ebwB}{331{,}941} & \ebn{\ebwC}{23.0} & \ebn{\ebwD}{421{,}624} & \ebn{\ebwE}{\textcolor[gray]{0.55}{1.00}} & 78.7 & 35.9 & \ebn{\ebwF}{224} & \ebn{\ebwG}{45.3} & \ebn{\ebwH}{269} \\
 & W4V4 & 16.60 & \ebn{\ebwA}{4{,}990} & \ebn{\ebwB}{57{,}057} & \ebn{\ebwC}{23.0} & \ebn{\ebwD}{62{,}070} & \ebn{\ebwE}{\textcolor[gray]{0.55}{6.78}} & 91.9 & 35.9 & \ebn{\ebwF}{28.9} & \ebn{\ebwG}{45.3} & \ebn{\ebwH}{74.2} \\
 & W2 & 16.10 & \ebn{\ebwA}{4{,}688} & \ebn{\ebwB}{37{,}051} & \ebn{\ebwC}{398} & \ebn{\ebwD}{42{,}137} & \ebn{\ebwE}{\textcolor[gray]{0.55}{9.98}} & 87.9 & 36.0 & \ebn{\ebwF}{15.0} & \ebn{\ebwG}{362} & \ebn{\ebwH}{377} \\
 & V2 & 16.15 & \ebn{\ebwA}{87{,}817} & \ebn{\ebwB}{325{,}118} & \ebn{\ebwC}{21.8} & \ebn{\ebwD}{412{,}957} & \ebn{\ebwE}{\textcolor[gray]{0.55}{1.02}} & 78.7 & 36.6 & \ebn{\ebwF}{224} & \ebn{\ebwG}{22.6} & \ebn{\ebwH}{246} \\
 & W2V2 & 15.64 & \ebn{\ebwA}{4{,}607} & \ebn{\ebwB}{36{,}412} & \ebn{\ebwC}{21.7} & \ebn{\ebwD}{41{,}041} & \ebn{\ebwE}{\textcolor[gray]{0.55}{10.25}} & 88.7 & 37.0 & \ebn{\ebwF}{15.0} & \ebn{\ebwG}{22.6} & \ebn{\ebwH}{37.6} \\
\midrule
\multirow{7}{*}{\makecell[l]{ADE20K\\[-1pt]{\scriptsize SDT-V2}}} & FP & 15.31 & \ebn{\ebwA}{118{,}221} & \ebn{\ebwB}{437{,}680} & \ebn{\ebwC}{544} & \ebn{\ebwD}{556{,}445} & \ebn{\ebwE}{} & 78.7 & 37.0 & \ebn{\ebwF}{224} & \ebn{\ebwG}{499} & \ebn{\ebwH}{723} \\
 & W4 & 15.26 & \ebn{\ebwA}{6{,}582} & \ebn{\ebwB}{75{,}260} & \ebn{\ebwC}{544} & \ebn{\ebwD}{82{,}386} & \ebn{\ebwE}{\textcolor[gray]{0.55}{6.75}} & 91.4 & 36.9 & \ebn{\ebwF}{28.9} & \ebn{\ebwG}{499} & \ebn{\ebwH}{528} \\
 & V4 & 15.26 & \ebn{\ebwA}{117{,}813} & \ebn{\ebwB}{436{,}170} & \ebn{\ebwC}{31.3} & \ebn{\ebwD}{554{,}014} & \ebn{\ebwE}{\textcolor[gray]{0.55}{1.00}} & 78.7 & 37.2 & \ebn{\ebwF}{224} & \ebn{\ebwG}{62.4} & \ebn{\ebwH}{286} \\
 & W4V4 & 15.21 & \ebn{\ebwA}{6{,}558} & \ebn{\ebwB}{74{,}990} & \ebn{\ebwC}{31.3} & \ebn{\ebwD}{81{,}580} & \ebn{\ebwE}{\textcolor[gray]{0.55}{6.82}} & 91.9 & 37.2 & \ebn{\ebwF}{28.9} & \ebn{\ebwG}{62.4} & \ebn{\ebwH}{91.4} \\
 & W2 & 15.23 & \ebn{\ebwA}{6{,}281} & \ebn{\ebwB}{49{,}638} & \ebn{\ebwC}{544} & \ebn{\ebwD}{56{,}463} & \ebn{\ebwE}{\textcolor[gray]{0.55}{9.86}} & 87.9 & 36.6 & \ebn{\ebwF}{15.0} & \ebn{\ebwG}{499} & \ebn{\ebwH}{514} \\
 & V2 & 15.52 & \ebn{\ebwA}{119{,}458} & \ebn{\ebwB}{442{,}261} & \ebn{\ebwC}{29.9} & \ebn{\ebwD}{561{,}749} & \ebn{\ebwE}{\textcolor[gray]{0.55}{0.99}} & 78.7 & 36.7 & \ebn{\ebwF}{224} & \ebn{\ebwG}{31.2} & \ebn{\ebwH}{255} \\
 & W2V2 & 15.37 & \ebn{\ebwA}{6{,}303} & \ebn{\ebwB}{49{,}816} & \ebn{\ebwC}{29.9} & \ebn{\ebwD}{56{,}149} & \ebn{\ebwE}{\textcolor[gray]{0.55}{9.91}} & 88.7 & 36.8 & \ebn{\ebwF}{15.0} & \ebn{\ebwG}{31.2} & \ebn{\ebwH}{46.2} \\
\midrule
\multirow{7}{*}{\makecell[l]{COCO\\[-1pt]{\scriptsize EMS-YOLO}}} & FP & 25.64 & \ebn{\ebwA}{32{,}613} & \ebn{\ebwB}{120{,}740} & \ebn{\ebwC}{165} & \ebn{\ebwD}{153{,}518} & \ebn{\ebwE}{} & 78.6 & 12.4 & \ebn{\ebwF}{129} & \ebn{\ebwG}{139} & \ebn{\ebwH}{269} \\
 & W4 & 25.58 & \ebn{\ebwA}{1{,}812} & \ebn{\ebwB}{22{,}486} & \ebn{\ebwC}{165} & \ebn{\ebwD}{24{,}464} & \ebn{\ebwE}{\textcolor[gray]{0.55}{6.28}} & 91.9 & 18.6 & \ebn{\ebwF}{16.4} & \ebn{\ebwG}{139} & \ebn{\ebwH}{156} \\
 & V4 & 25.46 & \ebn{\ebwA}{32{,}275} & \ebn{\ebwB}{119{,}491} & \ebn{\ebwC}{9.49} & \ebn{\ebwD}{151{,}775} & \ebn{\ebwE}{\textcolor[gray]{0.55}{1.01}} & 78.7 & 12.5 & \ebn{\ebwF}{129} & \ebn{\ebwG}{17.4} & \ebn{\ebwH}{147} \\
 & W4V4 & 25.40 & \ebn{\ebwA}{1{,}793} & \ebn{\ebwB}{22{,}271} & \ebn{\ebwC}{9.48} & \ebn{\ebwD}{24{,}074} & \ebn{\ebwE}{\textcolor[gray]{0.55}{6.38}} & 92.5 & 18.9 & \ebn{\ebwF}{16.4} & \ebn{\ebwG}{17.4} & \ebn{\ebwH}{33.8} \\
 & W2 & 25.35 & \ebn{\ebwA}{1{,}714} & \ebn{\ebwB}{16{,}109} & \ebn{\ebwC}{165} & \ebn{\ebwD}{17{,}989} & \ebn{\ebwE}{\textcolor[gray]{0.55}{8.53}} & 89.6 & 25.4 & \ebn{\ebwF}{8.38} & \ebn{\ebwG}{139} & \ebn{\ebwH}{148} \\
 & V2 & 25.38 & \ebn{\ebwA}{32{,}224} & \ebn{\ebwB}{119{,}300} & \ebn{\ebwC}{9.03} & \ebn{\ebwD}{151{,}533} & \ebn{\ebwE}{\textcolor[gray]{0.55}{1.01}} & 78.7 & 12.5 & \ebn{\ebwF}{129} & \ebn{\ebwG}{8.70} & \ebn{\ebwH}{138} \\
 & W2V2 & 25.11 & \ebn{\ebwA}{1{,}695} & \ebn{\ebwB}{15{,}956} & \ebn{\ebwC}{9.01} & \ebn{\ebwD}{17{,}660} & \ebn{\ebwE}{\textcolor[gray]{0.55}{8.69}} & 90.4 & 25.8 & \ebn{\ebwF}{8.38} & \ebn{\ebwG}{8.70} & \ebn{\ebwH}{17.1} \\
\bottomrule
\end{tabular}
\caption{Energy breakdown, model size, and memory of membrane potentials per image. W$b$V$b$ are the settings of \cref{tab:wv}, and W$b$ and V$b$ quantize only the weights or only the membrane potentials. Compute, memory, and neuron are the three terms of \cref{eq:energy}. FR is the mean firing rate over all spiking neurons. Reduction is the EC of FP divided by the EC of the setting. Non-spiking is the share of layers with non-spiking inputs, which are costed with \mbox{$\mathrm{FR}_\ell=1$} and overlap the compute and memory terms. Weights is the model size, Membrane is the memory that stores the membrane potentials, one per neuron with all neurons held at once, and Footprint is their sum, as in \cref{tab:wv}.}
\label{tab:energy-breakdown}
\end{table*}

\paragraph{Energy model.}
Following~\cite{yan2024reconsidering,yan2026otters}, a weight layer $\ell$ performs $\mathrm{SOP}_\ell=O_\ell\,\mathrm{FR}_\ell\,T$ synaptic operations, where $O_\ell$ is its number of multiply-accumulate operations in one time step and $\mathrm{FR}_\ell$ is the firing rate of its input. A spiking layer $\ell$ has $P_\ell$ neurons with firing rate $\mathrm{FR}^{\mathrm{out}}_\ell$. With weight bit width $b_w$ and membrane bit width $b_v$, the three terms of \cref{eq:ec} are
\begin{equation}
\begin{aligned}
E_{\mathrm{compute}}&=\sum_\ell\mathrm{SOP}_\ell\,e_{\mathrm{add}}(b_w),\\
E_{\mathrm{memory}}&=\sum_\ell\mathrm{SOP}_\ell\bigl[b_w e_{\mathrm{wread}}+e_{\mathrm{spike}}\bigr],\\
E_{\mathrm{neuron}}&=\sum_\ell P_\ell T\bigl[e_{\mathrm{state}}(b_v)(1+\mathrm{FR}^{\mathrm{out}}_\ell)+e_{\mathrm{leak}}\bigr].
\end{aligned}
\label{eq:energy}
\end{equation}
Each synaptic operation adds one weight to a membrane potential, reads this $b_w$-bit weight, and transmits one spike. Each neuron updates and compares its membrane potential at every time step, resets it when it fires, and leaks. \Cref{tab:energy-constants} lists the energy of each operation. A 32-bit synaptic addition is costed as a floating-point addition~\cite{horowitz2014computing}, because a binary input spike turns the multiplication into a conditional addition. Under this model, two quantization methods with the same bit widths differ in EC only through their firing rates.

\paragraph{Measurement.}
$O_\ell$ and $P_\ell$ are counted with the evaluation preprocessing, at $224\times224$ for ImageNet, on whole images for segmentation, and on images letterboxed to 640 pixels for COCO, and are averaged over 16 evaluation images. Firing rates are measured on the full evaluation sets of CIFAR and VOC, on 5{,}000 ImageNet evaluation images, and on 500 evaluation images of ADE20K and COCO. Layers whose inputs are not spikes are costed with $\mathrm{FR}_\ell=1$. They are the first layer of each network, the classifiers after pooling, the detection head of EMS-YOLO, and the SDT-V2 convolutions that follow another convolution without spiking neurons in between. The bit widths are those of the quantized weights and of the layer-wise pre-fire membrane grids, and W$b$ and V$b$ keep the other quantity at 32 bits. The 8-bit first layer and detection head of EMS-YOLO are costed with $e_{\mathrm{add}}(4)$, since \cref{tab:energy-constants} has no 8-bit value. Products of two activations in attention involve no weights and are not counted.

\paragraph{Energy savings.}
At full precision, memory takes 79\% of EC (\cref{tab:energy-breakdown}), mostly for weight reads, whose cost is proportional to $b_w$. Weight quantization therefore provides almost the entire saving, as W4 and W2 alone lower EC by \mbox{6.3--6.8$\times$} and \mbox{8.5--10.2$\times$}, close to W4V4 and W2V2. Membrane quantization alone changes EC by less than 5\%, since its bit width enters only the neuron term, at most 0.25\% of EC at full precision. Going from W4V4 to W2V2 lowers EC by only \mbox{1.4--1.6$\times$}, because a spike transmission costs the same at every bit width and takes 42\% of the energy of a 2-bit synaptic operation.

\paragraph{Memory.}
The model size counts each weight at its bit width, one 32-bit scale per output channel, and the other parameters at 32 bits, with batch normalization folded and 1\,MB being $2^{20}$ bytes. Neuromorphic chips also keep the membrane potential of every neuron in on-chip SRAM across time steps~\cite{merolla2014million,davies2018loihi}, which \cref{eq:energy} does not count. The Membrane column of \cref{tab:energy-breakdown} gives this memory for one image, and the memory footprint of \cref{tab:wv} adds it to the model size. On segmentation and detection, the large feature maps make this memory exceed the model at full precision, so after W2 the membrane potentials take most of the footprint, and W2V2 reduces it from 377 to 37.6\,MB on VOC.

\end{document}